\newcommand{\D}{\mathcal{D}\xspace}
\newcommand{\I}{\mathbb{I}\xspace}
\newcommand{\R}{\mathbb{R}\xspace}
\newcommand{\dx}{\text{d}x\xspace}
\newcommand{\psipdf}{\psi(x;\mu,\theta,p)\xspace}
\newtheorem{const}{Constraint}
\begin{document}

\title{Asymmetric Distributions from Constrained Mixtures}

\author{\name Conrado S.\ Miranda \email conrado@dca.fee.unicamp.br \\
       \name Fernando J.\ Von Zuben \email vonzuben@dca.fee.unicamp.br \\
       \addr School of Electrical and Computer Engineering\\
       University of Campinas\\
       Campinas, SP 13083-852, Brazil}

\editor{}

\maketitle

\begin{abstract}
  This paper introduces constrained mixtures for continuous distributions,
  characterized by a mixture of distributions where each distribution has a
  shape similar to the base distribution and disjoint domains. This new concept
  is used to create generalized asymmetric versions of the Laplace and normal
  distributions, which are shown to define exponential families, with known
  conjugate priors, and to have maximum likelihood estimates for the original
  parameters, with known closed-form expressions. The asymmetric and symmetric
  normal distributions are compared in a linear regression example, showing that
  the asymmetric version performs at least as well as the symmetric one, and in
  a real world time-series problem, where a hidden Markov model is used to fit a
  stock index, indicating that the asymmetric version provides higher likelihood
  and may learn distribution models over states and transition distributions
  with considerably less entropy.
\end{abstract}

\begin{keywords}
  Asymmetric probability distribution,
  Exponential family,
  Hidden Markov models,
  Maximum likelihood estimation,
  Mixture models
\end{keywords}

\section{Introduction}
There is a plethora of probability distributions to fit the most diverse uses.
However, even with this abundance of distributions, some applications can not be
solved using them directly, requiring the use of probabilistic
graphs~\citep{koller2009probabilistic}, like mixture
models~\citep{mclachlan1988mixture}, hidden Markov
models~\citep{baum1966statistical}, or latent Dirichlet
allocation~\citep{blei2003latent}, where a set of distributions is used to build
the joint probability distribution.

While these more complex models provide additional flexibility to describe the
problem, they are still limited by the underlying distributions used. This
motivates the search for new distributions to describe some data peculiarity,
and one of particular interest is the asymmetry of the distribution.

There are naturally asymmetric distributions, such as the lognormal
distribution~\citep{lognormal}, but it is also possible to introduce asymmetry
in symmetric distributions, like the skew normal distribution~\citep{o1976bayes}
does. This distribution is able to control the skewness of the normal
distribution, at the cost of losing closed-form expressions for the maximum
likelihood estimates. Additionally, by modifying the shape of the distribution,
its original interpretability is also lost.

To keep the interpretability, which may be important when analyzing a fitted
model, the shape of the distributions used must be maintained, such that the
user can choose the ones he or she knows how to analyze. For instance, this is
what happens with mixture models, where the known base distributions just change
their parameters and are weighted.

In this paper, we introduce the concept of a constrained mixture of
distributions for continuous distributions, which differs from the traditional
mixture in that, instead of each distribution being defined in the whole domain
and being able to overlap with the other distributions, the domain is
partitioned among the distributions. In this way, they are defined only in their
segment, and all of them are instances of the same underlying distribution with
different parameters that guarantee that the continuity of the original
distribution is kept. This allows weighting each segment and analyzing them
separately, like one would do with the distributions in a standard mixture
model.

The constrained mixture is then used to create asymmetric versions of the
Laplace and normal distributions, where the symmetric versions are particular
cases. These new distributions are shown to define an exponential family when
the partitions are known, which allows them to be easily used in existing models
designed to work with these kinds of distributions, like in latent Dirichlet
models~\citep{banerjee2007latent} and co-clustering~\citep{shan2008bayesian},
and their conjugate priors, with closed-form expressions, are also given.

We also show for these new asymmetric distributions that, if the weight of
each partition is known, then the maximum likelihood estimates are known and
their closed-form expressions are provided. Furthermore, we provide a
hill-climbing algorithm to fit the weight of the partitions, which allows
maximum likelihood estimates for all the parameters.

To show the power of introducing asymmetry to the normal distribution, two
applications are provided. The first is a simple linear regression example
problem with asymmetric noise used to gain insight into how the asymmetry
affects the estimation and show experimentally that the asymmetric likelihood is
lower bounded by the symmetric likelihood. The second is a hidden Markov model
used to fit a real world stock index time-series, which shows that the
flexibility introduced by the asymmetry not only increases the likelihood, but
may also provide insight into the system and reduce its entropy.

This paper is organized as follows. Section~\ref{sec:mixture} introduces the
concept of constrained mixtures, and the asymmetric versions of the Laplace and
normal distributions are introduced in Section~\ref{sec:asymmetric_examples}.
Section~\ref{sec:optimization} proves optimality conditions for the maximum
likelihood estimates and provides their closed-form expressions.
Section~\ref{sec:applications} compares the performance of the asymmetric normal
distribution with the symmetric version for one example and one real world
problem, showing the advantages of the new distribution. Finally,
Section~\ref{sec:conclusion} summarizes the findings and indicates future
research directions.

\section{Constrained Mixture}
\label{sec:mixture}
A constrained mixture is a special kind of mixture of distributions
characterized by the existence of only one underlying distribution so that the
domain is split in disjoint segments. Each segment has its own distribution,
which must be similar to the base distribution, that is, there are known
parameters for the base distribution that provide the shape of the distribution
in the segment. Moreover, the distributions must be continuous and the weights
for each segment must be provided.

Since a mixture of $N > 2$ distributions can be described as a mixture of 2
distributions, where one of those is a mixture of $N-1$ distributions itself,
we will develop the equations only for the base-case of $N=2$. This not only
simplifies the problem, but also is associated with the number of distributions
used to create the asymmetric versions of the Laplace and normal distributions.

\begin{definition}[Constrained Mixture]
Let $\phi(x;\theta) \colon \R \times \D(\theta) \to [0, \infty)$ be the
continuous probability density function (pdf) for some distribution $D$, where
$\D(\cdot)$ is the domain of its argument. Let $\phi_+(x;\mu,\theta) =
\phi(x;\theta) \I[x \ge \mu]$ and $\phi_-(x;\mu,\theta) = \phi(x;\theta) \I[x <
\mu]$, where $\I[\cdot]$ is the indicator function, be the partitions'
distributions. Let $p \in (0,1)$ be a weight parameter. Then the constrained
mixture $D^*$ is described by a pdf $\psipdf \colon \R^2 \times \D(\theta)
\times (0,1) \to [0,\infty)$ that satisfies the following constraints for all
$\mu$, $\theta$, and $p$ in the domain:

\begin{const}[Continuity]
  \label{const:continuity}
  The pdf is continuous at $x = \mu$, which means that
  \begin{equation*}
    \lim_{x \to \mu^+} \psipdf
    =
    \lim_{x \to \mu^-} \psipdf.
  \end{equation*}
\end{const}

\begin{const}[Mixture]
  \label{const:mixture}
  There are known functions $\Theta_\pm(\mu,\theta,p) \colon \R \times
  \D(\theta) \times (0,1) \to \D(\theta)$ and normalizing constant $Z \in
  (0,\infty)$ such that
  \begin{equation*}
    \psipdf Z = p \phi_-(x;\Theta_-(\mu,\theta,p))
    +(1-p) \phi_+(x;\Theta_+(\mu,\theta,p)).
  \end{equation*}
\end{const}
\end{definition}

Constraint~\ref{const:continuity} guarantees that the continuity of
$\phi(\cdot)$ is preserved, while Constraint~\ref{const:mixture} builds a
mixture that forces each segment of the new pdf $\psi(\cdot)$ to have the same
structure as the original pdf $\phi(\cdot)$, while also placing weight $p$ and
$1-p$ on the left and right sides of the partition, respectively. The functions
$\Theta_\pm(\cdot)$ perform the mapping from the constraint parameter $\mu$ and
$p$ and the underlying distribution parameters $\theta$ to a new set of
parameters $\Theta_\pm(\mu,\theta,p)$ that are used in each side of the
partition.

From Constraint~\ref{const:mixture} and the fact that $\psi(\cdot)$ is a pdf,
two additional redundant constraints can be defined, which will be used later to
define auxiliary variables.

\begin{const}[Volume]
  \label{const:normalization}
  Since $\psi(\cdot)$ is a pdf, it has unitary volume:
  \begin{equation*}
    \int_{-\infty}^\infty \psipdf \dx = 1.
  \end{equation*}
\end{const}

\begin{const}[Weighting]
  \label{const:weight}
  The mixture places weight $p$ in the left part of the distribution, which can
  be written as:
  \begin{equation*}
    \int_{-\infty}^\mu \psipdf \dx = p.
  \end{equation*}
\end{const}

The sampling of the new distribution $D^*$ can be performed by sampling $u \sim
\mathcal U([0,1])$ from the uniform distribution, followed by sampling from the
distribution $D_-'$ described by the non-normalized pdf $\phi_-(\cdot)$ if $u <
p$ or from $D_+'$, with non-normalized pdf $\phi_+(\cdot)$, otherwise.

Moreover, if the split parameter $\mu$ is fixed and the base distribution $D$
define an exponential family, then the new distribution $D^*$ also defines an
exponential family. An exponential family is a set of probability distributions
whose probability density functions can be expressed as
\begin{equation}
  \label{eq:exponential_family}
  f(x | \theta) = h(x) \exp\left(\eta(\theta)^T T(x) - A(\theta)\right),
\end{equation}
where $\theta$ are the parameters of the distribution and $h(x)$, $T(x)$,
$\eta(\theta)$, and $A(\theta)$ are known
functions~\citep{banerjee2005clustering}.

It is important to highlight that this result is not unexpected when using the
constrained mixture. From Constraint~\ref{const:mixture}, if the split position
$\mu$ is known, both sides behave like the underlying distribution. Therefore,
we expect the natural parameter $\eta$ to be produced by stacking the natural
parameters $\eta_-(\Theta_-)$ and $\eta_+(\Theta_+)$ for both sides. Moreover,
the sufficient statistics $T$ should be produced by stacking $T_- \I[x < \mu]$
and $T_+ \I[x \ge \mu]$, which are the statistics for each side of the
distribution.

We also note that we can not hope that the full distribution, without fixed
$\mu$, defines an exponential family too. Since the data is partitioned by
$\mu$, we cannot separate the data and parameters to create the term
$\eta(\theta)^T T(x)$ in Equation~\eqref{eq:exponential_family}.

\section{Asymmetric Distributions}
\label{sec:asymmetric_examples}
The constrained mixture defined in Section~\ref{sec:mixture} can be used to
create asymmetric versions of distributions. In this section, we will introduce
the asymmetric Laplace and normal distributions, showing that the symmetric
versions are particular cases with $p=0.5$. Later, in
Section~\ref{sec:optimization}, we will also show how to optimize the parameters
for these new distributions. To avoid cluttering, some proofs for this section
are presented in the Appendix.

To break the symmetry of these distributions, the separation parameter $\mu$ is
placed at the mode, usually also denoted by $\mu$. Therefore, the following
sections use them interchangeably, to avoid writing $\mu$ for the mixture and
$\mu'$ for the underlying distribution.
\subsection{Laplace Distribution}
\label{sec:example_laplace}
The Laplace distribution can be described by parameters $\theta = (\mu,
\lambda)$ and pdf
\begin{equation}
  \label{eq:laplace_pdf}
  \phi(x;\mu,\lambda) = \frac{\lambda}{2} \exp(-\lambda |x-\mu|).
\end{equation}
From this, we will build the asymmetric version and prove that it generalizes
the Laplace distribution.

\begin{theorem}[Asymmetric Laplace]
  \label{thm:laplace}
  Let $p \in (0,1)$, $\lambda \in (0,\infty)$, and $\mu \in \R$ be given. Then
  the pdf given by:
  \begin{equation}
    \label{eq:asym_laplace_pdf}
    \psi(x;\mu,\lambda,p) =
    \begin{dcases}
        \beta \exp(-\lambda\alpha (x-\mu)), & x \ge \mu
        \\
        \beta \exp(\lambda\alpha^{-1} (x-\mu)), & x < \mu,
      \end{dcases}
  \end{equation}
  where $\alpha = \sqrt{\frac{p}{1-p}}$ and $\beta = \frac{\lambda
  \alpha}{\alpha^2+1}$, satisfies all constraints in Section~\ref{sec:mixture}.
\end{theorem}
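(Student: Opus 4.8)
The plan is to verify each of the four constraints from Section~\ref{sec:mixture} directly for the given piecewise pdf. Since the statement hands us explicit formulas for $\alpha$ and $\beta$, the task is really a matter of confirming that these particular choices satisfy Constraints~\ref{const:continuity}--\ref{const:weight}, and I would organize the proof around checking them one at a time, since they are logically the defining requirements.

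First I would check Continuity (Constraint~\ref{const:continuity}). Evaluating both branches of $\psi$ at $x=\mu$ gives $\beta \exp(0) = \beta$ from each side, so the two one-sided limits both equal $\beta$ and agree trivially; this step is immediate and requires no use of the specific value of $\beta$.

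\begin{const}[Mixture]
\end{const}

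Next I would verify the Mixture constraint (Constraint~\ref{const:mixture}). Here I need to exhibit maps $\Theta_\pm$ and a normalizing constant $Z$ so that $\psi Z$ splits as $p\,\phi_-(x;\Theta_-) + (1-p)\,\phi_+(x;\Theta_+)$. The natural guess is to keep $\mu$ fixed and rescale the rate parameter $\lambda$ differently on each side: on the right branch the decay rate is $\lambda\alpha$, so I would set $\Theta_+ = (\mu, \lambda\alpha)$, and on the left branch the rate is $\lambda\alpha^{-1}$, so $\Theta_- = (\mu, \lambda\alpha^{-1})$. Recalling from Equation~\eqref{eq:laplace_pdf} that $\phi(x;\mu,\lambda')$ carries a prefactor $\lambda'/2$, matching the coefficient $\beta$ against $p \cdot (\lambda\alpha^{-1}/2)$ on the left and $(1-p)\cdot(\lambda\alpha/2)$ on the right is what pins down $Z$. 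The key algebraic fact I would use is that $\alpha^2 = p/(1-p)$, which makes $p/\alpha = (1-p)\alpha$, so both sides demand the same $Z$; substituting $\beta = \lambda\alpha/(\alpha^2+1)$ then determines $Z$ consistently. This consistency check is the main obstacle, since it is where the exact values of $\alpha$ and $\beta$ must conspire, and I would carry it out carefully rather than assert it.

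Finally I would handle Volume and Weighting (Constraints~\ref{const:normalization} and~\ref{const:weight}), which are really two evaluations of the same kind of integral. Integrating the left branch over $(-\infty,\mu]$ gives $\beta/(\lambda\alpha^{-1}) = \beta\alpha/\lambda$, and integrating the right branch over $[\mu,\infty)$ gives $\beta/(\lambda\alpha)$. Using $\beta = \lambda\alpha/(\alpha^2+1)$, the left integral simplifies to $\alpha^2/(\alpha^2+1)$ and the right to $1/(\alpha^2+1)$; their sum is $1$, establishing Volume, and since $\alpha^2/(\alpha^2+1) = p$ by the definition of $\alpha$, the left integral equals $p$, establishing Weighting. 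These two follow from the same elementary exponential integrals, so once the Mixture constraint is settled I expect them to be routine, with the only subtlety being the repeated use of the identity $\alpha^2/(\alpha^2+1)=p$.
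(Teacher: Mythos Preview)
Your proposal is correct and follows essentially the same approach as the paper's proof: both verify each constraint by direct computation using the same maps $\Theta_+ = (\mu,\lambda\alpha)$, $\Theta_- = (\mu,\lambda\alpha^{-1})$ and the identity $\alpha^2/(\alpha^2+1)=p$. The only differences are cosmetic---the paper checks Volume and Weighting before Mixture rather than after, and it carries the Mixture computation through to the explicit value $Z=1/2$---so your outline would reproduce the paper's argument once filled in. (Aside: the stray \texttt{\textbackslash begin\{const\}[Mixture]\textbackslash end\{const\}} block in your write-up should be deleted, as it would bump the paper's constraint counter.)
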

\begin{proof}See Appendix.\end{proof}

\begin{corollary}[Symmetric Laplace]
  Let $\lambda \in (0,\infty)$ and $\mu \in \R$ be given. Let $\phi(\cdot)$ and
  $\psi(\cdot)$ be defined as in Equations~\eqref{eq:laplace_pdf}
  and~\eqref{eq:asym_laplace_pdf}, respectively. Then the following holds:
  \begin{equation*}
    \forall x \in \R, \quad \phi(x;\mu,\lambda) = \psi(x;\mu,\lambda,0.5).
  \end{equation*}
\end{corollary}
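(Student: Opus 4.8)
The plan is to prove this corollary by direct substitution, evaluating the asymmetric pdf $\psi(\cdot)$ at the specific weight $p = 0.5$ and checking that it coincides with $\phi(\cdot)$ pointwise on each of the two branches of the domain. Since both functions are already piecewise with their break at $x = \mu$, it suffices to verify agreement separately on $x \ge \mu$ and on $x < \mu$.

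First I would compute the auxiliary quantities $\alpha$ and $\beta$ at $p = 0.5$. Because $\alpha = \sqrt{p/(1-p)}$, setting $p = 0.5$ gives $\alpha = 1$, and hence $\alpha^{-1} = 1$ as well. Substituting $\alpha = 1$ into $\beta = \lambda\alpha/(\alpha^2 + 1)$ yields $\beta = \lambda/2$. These two evaluations collapse the scale constant and equalize the two exponential rates on either side of the partition, which is exactly the symmetry we expect to recover.

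Next I would substitute these values back into the piecewise definition in Equation~\eqref{eq:asym_laplace_pdf}. On the branch $x \ge \mu$ this produces $\frac{\lambda}{2}\exp(-\lambda(x-\mu))$, and on the branch $x < \mu$ it produces $\frac{\lambda}{2}\exp(\lambda(x-\mu))$. To close the argument I would expand the absolute value in Equation~\eqref{eq:laplace_pdf} according to the sign of $x - \mu$: for $x \ge \mu$ we have $|x-\mu| = x-\mu$, giving $\frac{\lambda}{2}\exp(-\lambda(x-\mu))$, and for $x < \mu$ we have $|x-\mu| = -(x-\mu)$, giving $\frac{\lambda}{2}\exp(\lambda(x-\mu))$. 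Comparing the two branches of $\phi(\cdot)$ obtained this way against the two branches of $\psi(\cdot;0.5)$ shows they are identical, so the equality holds for every $x \in \R$.

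There is no genuine obstacle here: the argument is a one-line substitution followed by a two-case check. The only point requiring a little care is the sign bookkeeping when removing the absolute value, to ensure the $x < \mu$ branch of $\psi(\cdot)$ is matched against the correct expansion of $|x-\mu|$; because the partition boundary of $\psi(\cdot)$ at $x = \mu$ already coincides with the kink of $\phi(\cdot)$, the two cases line up without any further adjustment.
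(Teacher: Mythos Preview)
Your proposal is correct and follows essentially the same approach as the paper's own proof: compute $\alpha = 1$ and $\beta = \lambda/2$ at $p = 0.5$, then substitute into Equation~\eqref{eq:asym_laplace_pdf} to recover Equation~\eqref{eq:laplace_pdf}. The paper's version is simply terser, omitting the explicit branch-by-branch check and the expansion of $|x-\mu|$ that you spell out.
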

\begin{proof}
  With $p=0.5$, we have that $\alpha = 1$ and $\beta = \lambda/2$. Using these
  values in Equation~\eqref{eq:asym_laplace_pdf}, we arrive at
  Equation~\eqref{eq:laplace_pdf}.
\end{proof}

\begin{corollary}[Asymmetric Laplace Exponential Family]
  \label{cor:laplace_exp}
  Let $\mu \in \R$ be given. Then the asymmetric Laplace pdf given by
  Equation~\eqref{eq:asym_laplace_pdf} defines an exponential family with
  functions
  \begin{subequations}
  \label{eq:laplace_exp}
  \begin{align}
    h(x) &= 1, & \quad A(\lambda, p) &= -\ln \beta,
    \\
    T(x) &= \begin{bmatrix}
      |x-\mu| \I[x \ge \mu] \\
      |x-\mu| \I[x < \mu]
    \end{bmatrix},
    & \quad
    \eta(\lambda, p) &= \begin{bmatrix}
      -\lambda \alpha \\
      -\lambda \alpha^{-1}
    \end{bmatrix}.
  \end{align}
  \end{subequations}
\end{corollary}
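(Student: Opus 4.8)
The plan is to show directly that the piecewise pdf in Equation~\eqref{eq:asym_laplace_pdf} can be rewritten in the canonical exponential-family form of Equation~\eqref{eq:exponential_family} with the proposed $h$, $T$, $\eta$, and $A$. Since $\mu$ is fixed, the only free parameters are $\lambda$ and $p$, which enter through $\alpha = \sqrt{p/(1-p)}$ and $\beta = \lambda\alpha/(\alpha^2+1)$; these will be absorbed into $\eta(\lambda,p)$ and $A(\lambda,p)$, while all dependence on $x$ is carried by $T(x)$.

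First I would collapse the two branches into a single exponential. On $\{x \ge \mu\}$ we have $x-\mu = |x-\mu|$, so the exponent $-\lambda\alpha(x-\mu)$ equals $-\lambda\alpha\,|x-\mu|$; on $\{x < \mu\}$ we have $x-\mu = -|x-\mu|$, so $\lambda\alpha^{-1}(x-\mu)$ equals $-\lambda\alpha^{-1}\,|x-\mu|$. Because the indicators $\I[x\ge\mu]$ and $\I[x<\mu]$ partition $\R$, exactly one branch is active at each point, and the pdf can be written uniformly as
\begin{equation*}
  \psi(x;\mu,\lambda,p) = \beta\exp\!\bigl(-\lambda\alpha\,|x-\mu|\,\I[x\ge\mu] - \lambda\alpha^{-1}\,|x-\mu|\,\I[x<\mu]\bigr).
\end{equation*}

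Next I would match this expression term by term against Equation~\eqref{eq:exponential_family}. The two summands in the exponent are precisely the inner product $\eta(\lambda,p)^T T(x)$ with $\eta = (-\lambda\alpha,\,-\lambda\alpha^{-1})^T$ and $T(x) = (|x-\mu|\,\I[x\ge\mu],\;|x-\mu|\,\I[x<\mu])^T$. Writing the prefactor as $\beta = \exp(\ln\beta) = \exp(-A(\lambda,p))$ with $A(\lambda,p) = -\ln\beta$, and noting that there is no remaining $x$-dependent factor so that $h(x)=1$, reproduces exactly Equation~\eqref{eq:exponential_family}. This exhibits $\psi$ as a member of an exponential family with the stated functions.

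The computation is elementary, so there is no serious obstacle; the only points requiring care are bookkeeping ones. I would make sure the sign conventions are consistent when replacing $x-\mu$ by $\pm|x-\mu|$ on each side, and that $\beta \in (0,\infty)$ so that $\ln\beta$, and hence $A(\lambda,p)$, is well defined, which follows from $\lambda>0$ and $\alpha>0$ for all admissible $\lambda$ and $p\in(0,1)$. I would also note that one need only verify the functional identity pointwise, since Theorem~\ref{thm:laplace} already guarantees that $\psi$ is a genuine normalized pdf, so no separate check of $A$ as a log-partition function is required here.
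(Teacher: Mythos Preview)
Your proposal is correct and follows essentially the same approach as the paper: direct verification that plugging the stated $h$, $T$, $\eta$, and $A$ into Equation~\eqref{eq:exponential_family} reproduces Equation~\eqref{eq:asym_laplace_pdf}. The paper's proof is in fact just the one-line instruction to perform this verification, so your write-up simply spells out the details (collapsing the two branches via indicators and absolute values, then reading off the inner product $\eta^T T$ and the log-normalizer) that the paper leaves implicit.
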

\begin{proof}
  Using these functions in Equation~\eqref{eq:exponential_family}, we can verify
  that it matches Equation~\eqref{eq:asym_laplace_pdf}.
\end{proof}

Figure~\ref{fig:laplace} shows the asymmetric Laplace pdf $\psi(\cdot)$ for
different combinations of $x$ and $p$ with $\mu$ fixed to $0$. It is clear that,
with $p$ getting closer to $0$, the density is more strict on negative values,
that is, they are less likely to occur. However, this also increases the
uncertainty of positive values, which exhibit a slower decay.
\begin{figure*}[t]
  \centering
  \begin{subfigure}[b]{0.50\linewidth}
    \psfrag{p}[c][c]{$p$}
    \psfrag{x}[c][c]{$x$}
    \psfrag{p(x)}[b][t]{$\psi(\cdot)$}
\includegraphics[width=\linewidth]{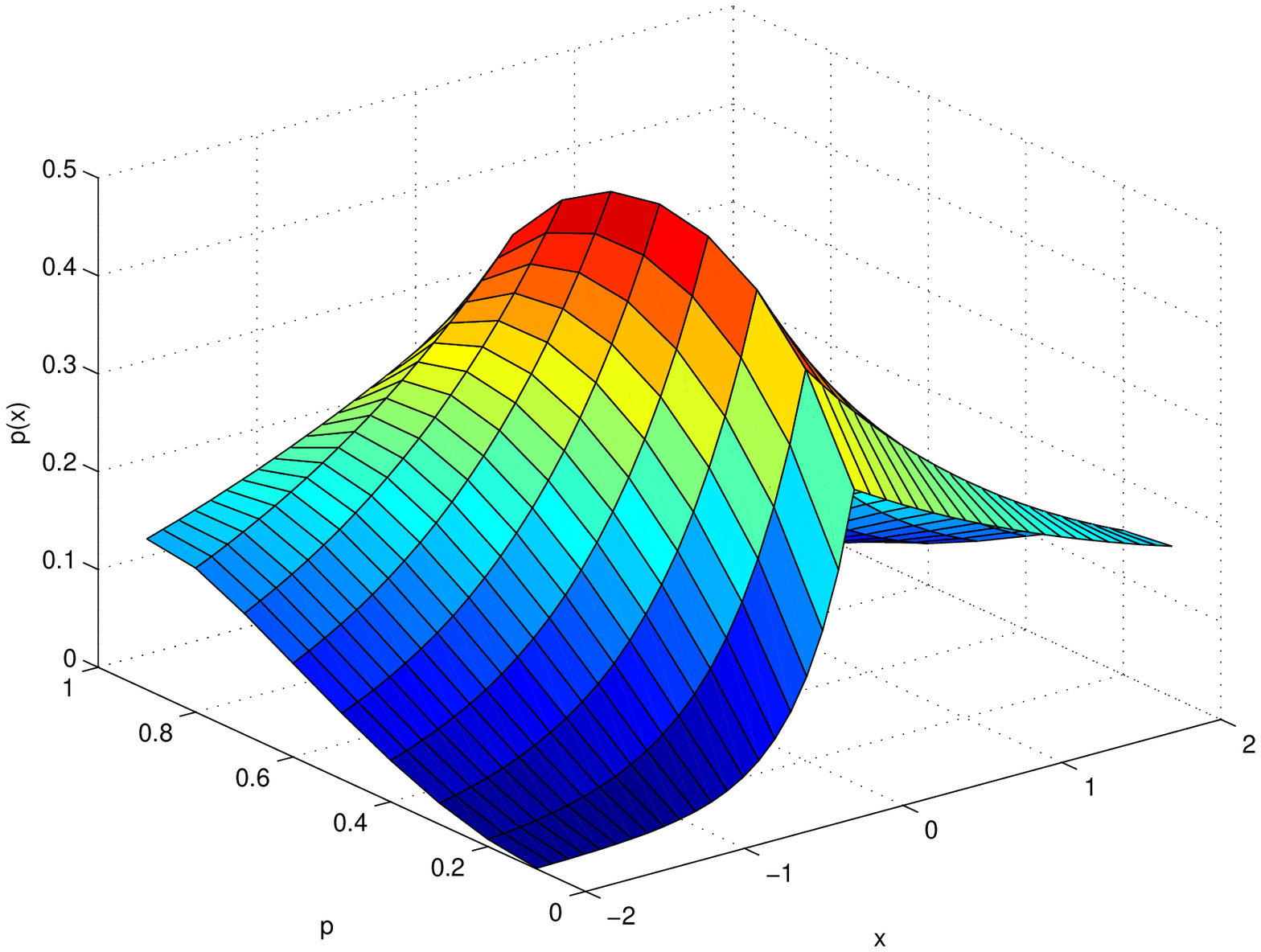}    \caption{Surface of the pdf}
  \end{subfigure}
  ~
  \begin{subfigure}[b]{0.45\linewidth}
    \psfrag{p}[b][c]{$p$}
    \psfrag{x}[t][c]{$x$}
\includegraphics[width=\linewidth]{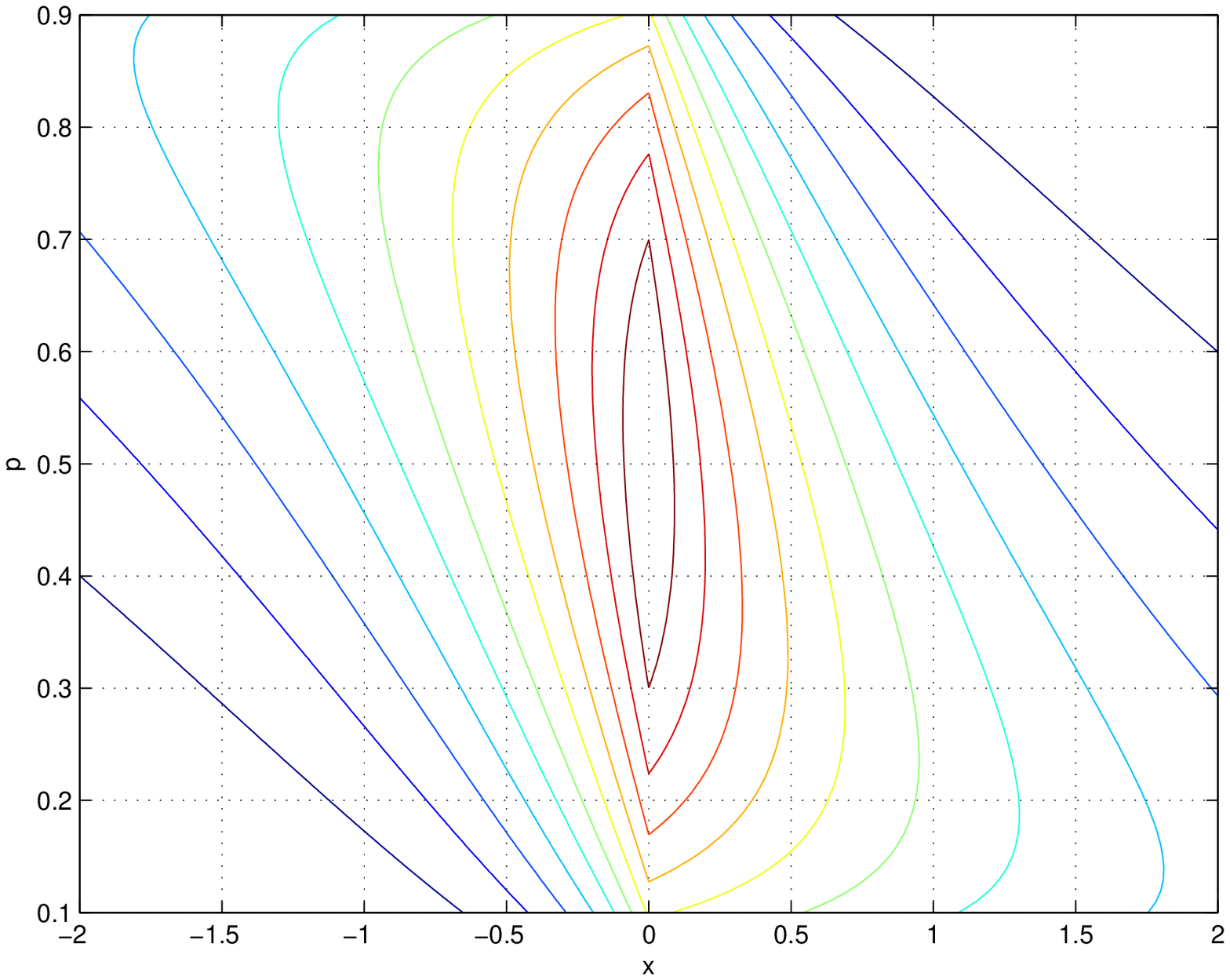}    \caption{Contours of the pdf}
  \end{subfigure}
  \caption{Asymmetric Laplace distribution for variable $p$ and $\mu = 0$. As
  $p$ gets smaller, less density is placed on negative values.}
  \label{fig:laplace}
\end{figure*}
\begin{figure*}[t]
  \centering
  \begin{subfigure}[b]{0.50\linewidth}
    \psfrag{p}[c][c]{$p$}
    \psfrag{x}[c][c]{$x$}
    \psfrag{p(x)}[b][t]{$\psi(\cdot)$}
\includegraphics[width=\linewidth]{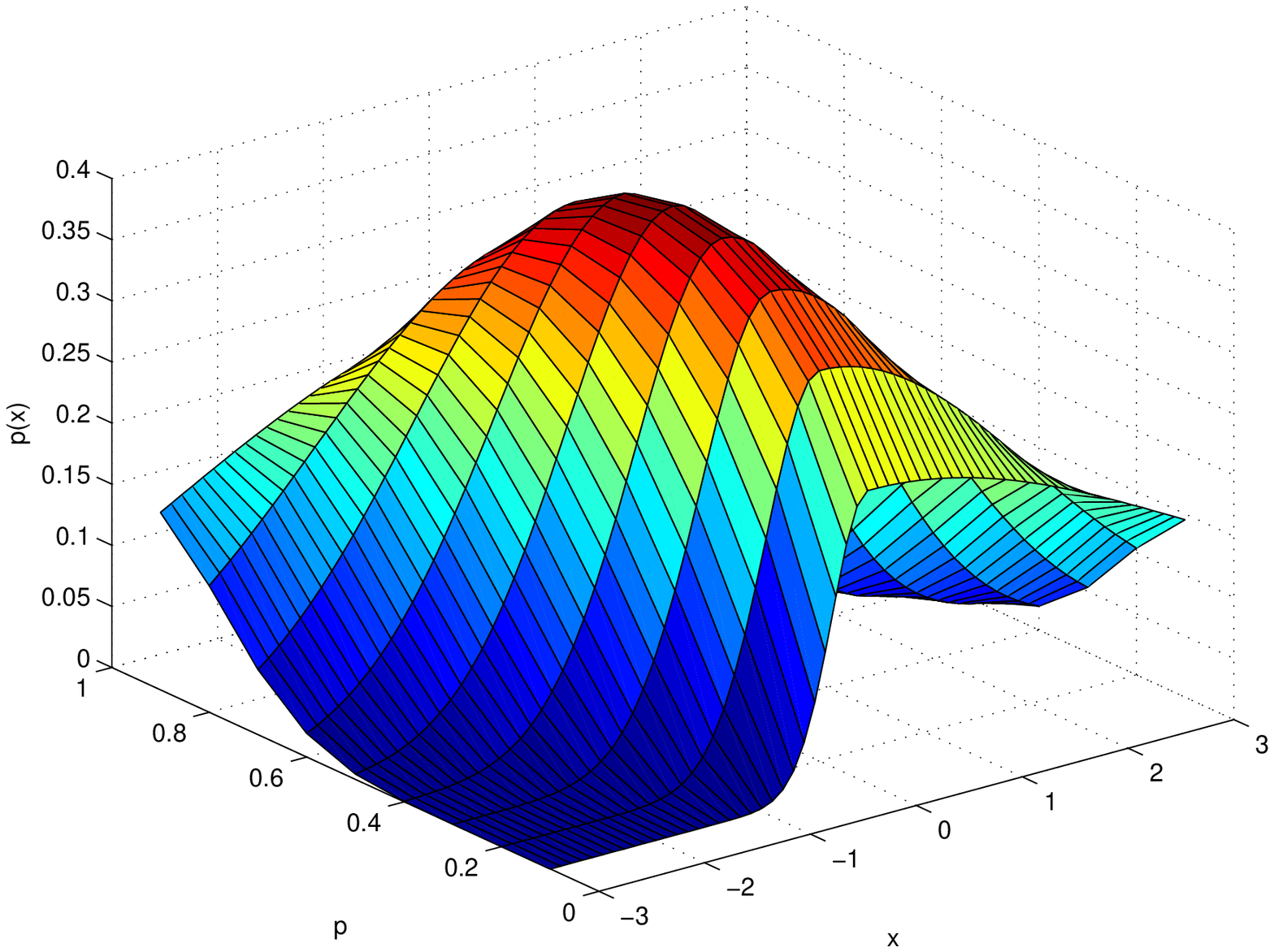}    \caption{Surface of the pdf}
  \end{subfigure}
  ~
  \begin{subfigure}[b]{0.45\linewidth}
    \psfrag{p}[b][c]{$p$}
    \psfrag{x}[t][c]{$x$}
\includegraphics[width=\linewidth]{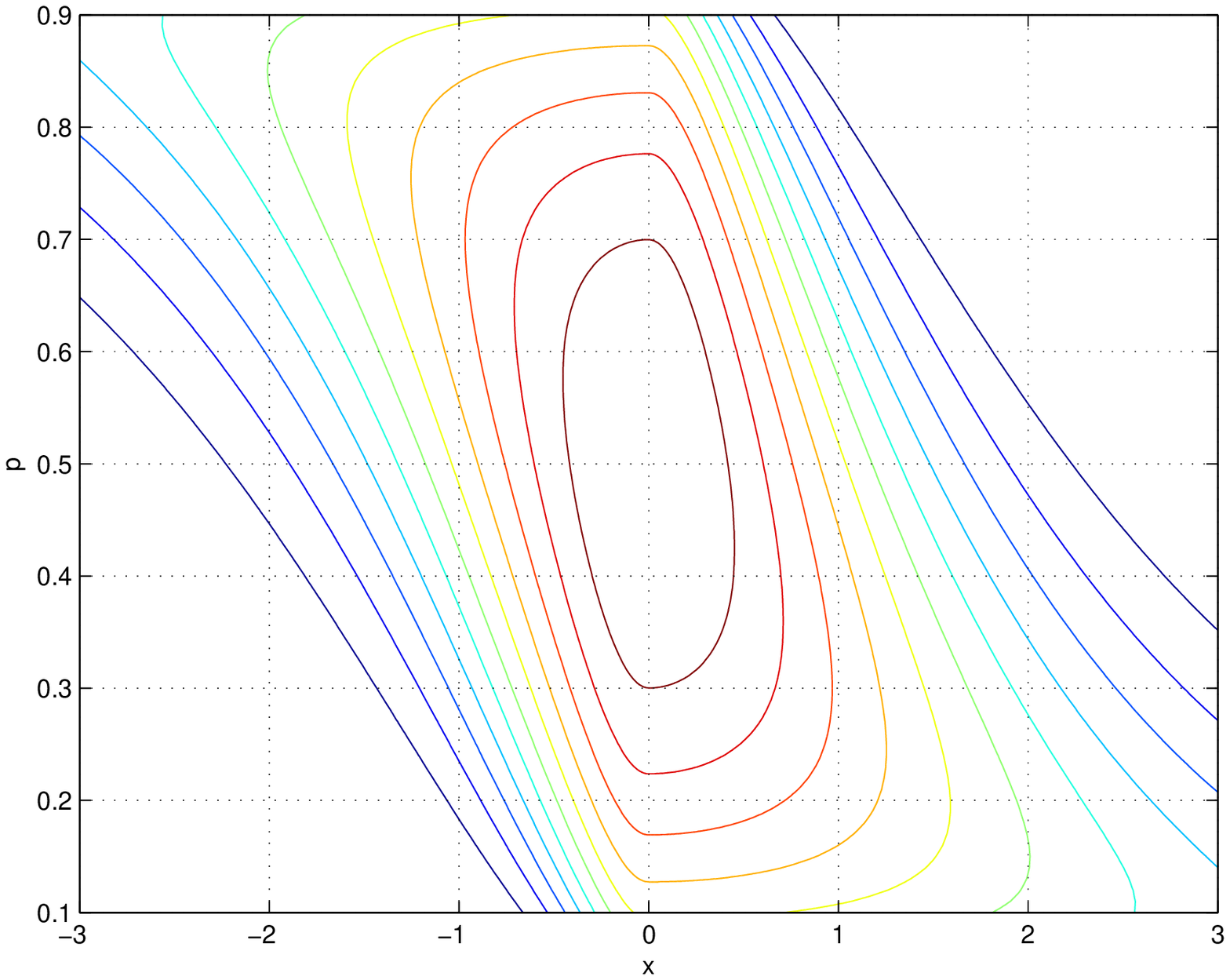}    \caption{Contours of the pdf}
  \end{subfigure}
  \caption{Asymmetric normal distribution for variable $p$ and $\mu = 0$. As
  $p$ gets smaller, less density is placed on negative values.}
  \label{fig:normal}
\end{figure*}
\subsection{Normal Distribution}
\label{sec:example_normal}
The normal distribution can be described by parameters $\theta = (\mu,\sigma)$
and pdf
\begin{equation}
  \label{eq:normal_pdf}
  \phi(x;\mu,\sigma) = \frac{1}{\sigma} \Phi\left(\frac{x-\mu}{\sigma}\right),
\end{equation}
where
\begin{equation}
  \label{eq:std_normal_pdf}
  \Phi(\xi) = \frac{1}{\sqrt{2\pi}} \exp\left(-\frac{1}{2} \xi^2\right)
\end{equation}
is the pdf of the standard normal distribution.

\begin{theorem}[Asymmetric Normal]
  \label{thm:normal}
  Let $p \in (0,1)$, $\sigma \in (0,\infty)$, and $\mu \in \R$ be given. Let
  $\Phi(\cdot)$ be defined as in Equation~\eqref{eq:std_normal_pdf}. Then the
  pdf given by:
  \begin{equation}
    \label{eq:asym_normal_pdf}
    \psi(x;\mu,\sigma,p) =
    \begin{dcases}
      \beta \Phi\left(\frac{x-\mu}{\sigma \alpha^{-1}}\right), & x \ge \mu
      \\
      \beta \Phi\left(\frac{x-\mu}{\sigma \alpha}\right), & x < \mu,
    \end{dcases}
  \end{equation}
  where $\alpha = \sqrt{\frac{p}{1-p}}$ and $\beta =
  \frac{2\alpha}{\sigma(\alpha^2+1)}$, satisfies all constraints in
  Section~\ref{sec:mixture}.
\end{theorem}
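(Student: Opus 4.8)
The plan is to verify the four constraints of Section~\ref{sec:mixture} one by one, following the template of the asymmetric Laplace proof (Theorem~\ref{thm:laplace}). Two of these, Constraints~\ref{const:normalization} and~\ref{const:weight}, are largely forced once the mixture structure is in place, so the real content lies in Constraints~\ref{const:continuity} and~\ref{const:mixture}. Continuity is immediate: as $x \to \mu$ the argument of $\Phi$ vanishes from both sides, so both branches of Equation~\eqref{eq:asym_normal_pdf} tend to the common value $\beta\Phi(0) = \beta/\sqrt{2\pi}$, independently of $\alpha$ and $\beta$.

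The central step is Constraint~\ref{const:mixture}. The key observation is that each branch is a rescaled standard normal density: recalling from Equation~\eqref{eq:normal_pdf} that $\phi(x;\mu,\sigma) = \frac{1}{\sigma}\Phi\left(\frac{x-\mu}{\sigma}\right)$, the right branch $\beta\Phi\left(\frac{x-\mu}{\sigma\alpha^{-1}}\right)$ is proportional to $\phi(x;\mu,\sigma\alpha^{-1})$ and the left branch to $\phi(x;\mu,\sigma\alpha)$. I would therefore try $\Theta_-(\mu,\sigma,p) = (\mu,\sigma\alpha)$ and $\Theta_+(\mu,\sigma,p) = (\mu,\sigma\alpha^{-1})$ and seek a constant $Z$ with
\begin{equation*}
  Z\,\psi(x;\mu,\sigma,p) = p\,\phi_-(x;\Theta_-) + (1-p)\,\phi_+(x;\Theta_+).
\end{equation*}
Matching coefficients on $\{x < \mu\}$ forces $Z\beta = p/(\sigma\alpha)$, whereas matching on $\{x \ge \mu\}$ forces $Z\beta = (1-p)\alpha/\sigma$. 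The step I expect to be the crux is reconciling these two requirements: they coincide exactly when $\alpha^2 = p/(1-p)$, which is the definition of $\alpha$ given in the statement. Plugging in $\beta = \frac{2\alpha}{\sigma(\alpha^2+1)}$ and $\alpha^2 + 1 = 1/(1-p)$ then yields the single consistent value $Z = \tfrac12 \in (0,\infty)$, which settles Constraint~\ref{const:mixture}.

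It remains to confirm Constraints~\ref{const:weight} and~\ref{const:normalization} by direct integration. Substituting $u = \frac{x-\mu}{\sigma\alpha}$ on the left tail and $v = \frac{\alpha(x-\mu)}{\sigma}$ on the right tail, and using the symmetry identity $\int_{-\infty}^0\Phi(u)\,du = \int_0^\infty\Phi(v)\,dv = \tfrac12$, the left mass evaluates to $\int_{-\infty}^\mu \psi\,\dx = \tfrac12\beta\sigma\alpha = \alpha^2/(\alpha^2+1) = p$, giving the weighting, and adding the right mass $\tfrac12\beta\sigma/\alpha$ yields total volume $\frac{\beta\sigma}{2}\cdot\frac{\alpha^2+1}{\alpha} = 1$ after inserting the definitions of $\alpha$ and $\beta$. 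With all four constraints verified, the distribution is a valid constrained mixture.
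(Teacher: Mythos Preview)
Your proposal is correct and follows essentially the same approach as the paper: verify continuity at $\mu$, identify $\Theta_\pm = (\mu,\sigma\alpha^{\pm 1})$ with $Z = \tfrac12$ for the mixture constraint, and check the volume and left-mass integrals directly. The only cosmetic difference is that the paper evaluates the half-line Gaussian integrals via the error function, whereas you invoke $\int_{-\infty}^0 \Phi = \int_0^\infty \Phi = \tfrac12$ directly; this is the same computation in slightly cleaner packaging.
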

\begin{proof}See Appendix.\end{proof}

\begin{corollary}[Symmetric Normal]
  Let $\sigma \in (0,\infty)$ and $\mu \in \R$ be given. Let $\phi(\cdot)$ and
  $\psi(\cdot)$ be defined as in Equations~\eqref{eq:normal_pdf}
  and~\eqref{eq:asym_normal_pdf}, respectively. Then the following holds:
  \begin{equation*}
    \forall x \in \R, \quad \phi(x;\mu,\sigma) = \psi(x;\mu,\sigma,0.5).
  \end{equation*}
\end{corollary}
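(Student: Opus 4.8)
The plan is to follow exactly the substitution strategy used for the Symmetric Laplace corollary, since the statement claims that the asymmetric normal pdf reduces to the symmetric one at the balanced weight $p = 0.5$. First I would evaluate the auxiliary quantities $\alpha$ and $\beta$ from Theorem~\ref{thm:normal} at $p = 0.5$. Since $\alpha = \sqrt{p/(1-p)}$, setting $p = 0.5$ gives $\alpha = \sqrt{1} = 1$, and consequently $\alpha^{-1} = 1$ as well. Substituting $\alpha = 1$ into $\beta = 2\alpha/(\sigma(\alpha^2+1))$ then yields $\beta = 2/(2\sigma) = 1/\sigma$.

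The key observation is that with $\alpha = 1$ the two scale factors $\sigma\alpha^{-1}$ and $\sigma\alpha$ appearing in the two branches of Equation~\eqref{eq:asym_normal_pdf} both collapse to $\sigma$, so the split at $x = \mu$ becomes irrelevant. I would then substitute $\alpha = 1$ and $\beta = 1/\sigma$ into both cases of Equation~\eqref{eq:asym_normal_pdf}, obtaining in each branch the expression $\frac{1}{\sigma}\Phi\left(\frac{x-\mu}{\sigma}\right)$. Because the two pieces are now identical and together cover all $x \in \R$, they merge into the single expression $\frac{1}{\sigma}\Phi\left(\frac{x-\mu}{\sigma}\right)$, which is precisely the normal pdf of Equation~\eqref{eq:normal_pdf}.

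There is no real obstacle here: the result is a direct verification by substitution, with the only mild point of care being the algebraic simplification of $\beta$ to $1/\sigma$ and the recognition that $\alpha = 1$ forces the two branches to coincide. No properties of $\Phi(\cdot)$ beyond its definition in Equation~\eqref{eq:std_normal_pdf} are required, so the argument is purely a matter of plugging in the special value $p = 0.5$.
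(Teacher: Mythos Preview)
Your proposal is correct and follows exactly the same approach as the paper's proof: evaluate $\alpha = 1$ and $\beta = 1/\sigma$ at $p = 0.5$, then substitute into Equation~\eqref{eq:asym_normal_pdf} to recover Equation~\eqref{eq:normal_pdf}. The paper states this in two sentences, while you have spelled out the intermediate observation that the two branches coincide, but the argument is identical.
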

\begin{proof}
  With $p=0.5$, we have that $\alpha = 1$ and $\beta = 1/\sigma$. Using these
  values into Equation~\eqref{eq:asym_normal_pdf}, we arrive at
  Equation~\eqref{eq:normal_pdf}.
\end{proof}

\begin{corollary}[Asymmetric Normal Exponential Family]
  \label{cor:normal_exp}
  Let $\mu \in \R$ be given. Then the asymmetric normal pdf given by
  Equation~\eqref{eq:asym_normal_pdf} defines an exponential family with
  functions
  \begin{subequations}
  \label{eq:normal_exp}
  \begin{align}
    h(x) &= \frac{1}{\sqrt{2\pi}}, & \quad A(\sigma, p) &= -\ln \beta,
    \\
    T(x) &= \begin{bmatrix}
      (x-\mu)^2 \I[x \ge \mu] \\
      (x-\mu)^2 \I[x < \mu]
    \end{bmatrix},
    & \quad
    \eta(\sigma, p) &= \begin{bmatrix}
    \displaystyle
      -\frac{1}{2 \sigma^2 \alpha^{-2}} \\
    \displaystyle
      -\frac{1}{2 \sigma^2 \alpha^2}
    \end{bmatrix}.
  \end{align}
  \end{subequations}
\end{corollary}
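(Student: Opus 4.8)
The plan is to proceed by direct verification, exactly as in the proof of Corollary~\ref{cor:laplace_exp}: I will substitute the proposed functions $h$, $T$, $\eta$, and $A$ into the generic exponential-family form of Equation~\eqref{eq:exponential_family} and check, region by region, that the result coincides with the asymmetric normal pdf of Equation~\eqref{eq:asym_normal_pdf}. Because the sufficient statistic $T(x)$ carries the indicators $\I[x \ge \mu]$ and $\I[x < \mu]$, the two cases $x \ge \mu$ and $x < \mu$ decouple cleanly, so the verification splits into two essentially identical computations.

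First I would treat the region $x \ge \mu$. Here $T(x) = ((x-\mu)^2, 0)^T$, so the inner product collapses to a single surviving term, $\eta(\sigma,p)^T T(x) = -\frac{(x-\mu)^2}{2\sigma^2\alpha^{-2}}$. Combining this with $h(x) = 1/\sqrt{2\pi}$ and $-A(\sigma,p) = \ln\beta$ inside Equation~\eqref{eq:exponential_family} yields $\frac{\beta}{\sqrt{2\pi}} \exp\!\left(-\frac{(x-\mu)^2}{2\sigma^2\alpha^{-2}}\right)$. The remaining step is to recognize this expression as $\beta\,\Phi\!\left(\frac{x-\mu}{\sigma\alpha^{-1}}\right)$, which follows immediately from the definition of the standard normal pdf in Equation~\eqref{eq:std_normal_pdf} once the argument $\frac{x-\mu}{\sigma\alpha^{-1}}$ is squared. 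This reproduces the top branch of Equation~\eqref{eq:asym_normal_pdf}.

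The region $x < \mu$ is handled symmetrically: now $T(x) = (0, (x-\mu)^2)^T$, the surviving term is $-\frac{(x-\mu)^2}{2\sigma^2\alpha^2}$, and the same identification with the standard normal pdf gives $\beta\,\Phi\!\left(\frac{x-\mu}{\sigma\alpha}\right)$, matching the bottom branch. Since $h(x)$ and $A(\sigma,p)$ are identical across both regions, the two branches assemble into the full pdf, completing the verification.

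There is no genuine obstacle here; the only point requiring care is the bookkeeping of the scaling factors, namely ensuring that the factor $\alpha^{-2}$ appearing in the first natural-parameter component correctly pairs with the scale $\sigma\alpha^{-1}$ inside the argument of $\Phi$ on the branch $x \ge \mu$ (and likewise that $\alpha^2$ pairs with $\sigma\alpha$ on the other branch). Getting this pairing backwards is the single most likely source of error, but it is resolved simply by squaring the arguments of $\Phi$ and comparing coefficients of $(x-\mu)^2$.
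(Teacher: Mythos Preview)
Your proposal is correct and follows exactly the same approach as the paper: direct verification by substituting the given $h$, $T$, $\eta$, and $A$ into Equation~\eqref{eq:exponential_family} and checking that the result matches Equation~\eqref{eq:asym_normal_pdf}. The paper's proof is in fact a one-line statement of this verification, so your write-up simply supplies the case-by-case details that the paper leaves implicit.
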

\begin{proof}
  Using these functions in Equation~\eqref{eq:exponential_family}, we can verify
  that it matches Equation~\eqref{eq:asym_normal_pdf}.
\end{proof}

Figure~\ref{fig:normal} shows the asymmetric normal pdf $\psi(\cdot)$ for
combinations of $x$ and $p$ with $\mu$ fixed to $0$. Just like the asymmetric
Laplace distribution, $p$ values closer to $0$ are more strict on negative
values, making the distribution more conservative on these cases.

\section{Parameter Optimization}
\label{sec:optimization}
Once defined the new distributions, we are interested in adjusting their
parameters to fit some data set. However, mixture models involve latent
variables, such as the indicator of to which class a given sample belongs in
standard mixture or the current state in hidden Markov models. For an asymmetric
distribution, the indicator is given deterministically from $\mu$, since we just
have to identify if the observed value is larger or smaller than the parameter
$\mu$. This parameter, in turn, depends on the weight $p$, which specifies how
much probability to give to each side of $\mu$.

This dependency between parameters makes the analysis and optimization process
more complicated and, in our development, we were not able to find a solution to
simultaneously optimize $\theta$, $\mu$, and $p$ at the same time while
providing guarantees. However, if we fix either $\mu$ or $p$, then we are able
to find formulations to optimize the others.

Let $S = \{s_i\}, i \in \{1,2,\ldots,N\},s_i \in \R$, be a set of samples. Using
Constraint~\ref{const:mixture}, the parameter's log-likelihood can be written
as:
\begin{subequations}
\begin{gather}
  \label{eq:general likelihood}
  \ln \mathcal L(\mu,\theta,p | S) = -|S| \ln Z + \ln \mathcal L_p + \ln
  \mathcal L_\phi
  \\
  \label{eq:common_likelihood}
  \ln \mathcal L_p = |S_-| \ln p + |S_+| \ln (1-p)
  \\
  \label{eq:distribution_likelihood}
  \ln \mathcal L_\phi =
  \sum_{s_i \in S_-} \ln \phi_-(s_i;\Theta_-(\cdot)) +
  \sum_{s_i \in S_+} \ln \phi_+(s_i;\Theta_+(\cdot)),
\end{gather}
\end{subequations}
where $S_- = \{s_i \in S | s_i < \mu\}$ and $S_+ = \{s_i \in S | s_i \ge
\mu\}$.

If we consider the parameter $p$ fixed, then the maximum likelihood problem
for both distributions has known optima, and they have closed-form expressions,
as we will show in Sections~\ref{sec:optimization:laplace}
and~\ref{sec:optimization:normal}. Since $p$ is only one value, it can be
optimized numerically, as described in Section~\ref{sec:optimization:asymmetry}.

Alternatively, since both distributions were shown to define the exponential
families in Section~\ref{sec:asymmetric_examples} when $\mu$ is fixed and
exponential families have conjugate priors~\citep{barndorff2014information},
then the new distributions must have conjugate priors. Moreover, the conjugate
priors probability density function can be written as
\begin{equation}
  \label{eq:prior}
  p(\eta | \chi, \nu) = f(\chi, \nu) \exp(\eta^T \chi - \nu A(\eta)),
\end{equation}
where $\eta$ and $A(\eta)$ are the natural parameters and a function of them. In
this section, we will also find the priors and show that their structure is
sound. With these priors, one could compute the posterior distribution over the
parameters~\citep{barndorff2014information,bishop2006pattern} or use the new
distributions as part of a more complex model with intractable closed-form,
using an approach such as variational inference~\citep{blei2003latent} or Gibbs
sampling~\citep{geman1984stochastic}, since the best approximating posterior is
the conjugate prior.

Therefore, we provide two methods for optimizing the parameters, one where the
partition weight $p$ is defined and we compute the maximum likelihood, and one
where the partitions themselves are defined through a fixed $\mu$ and we can
compute the full posterior on the parameters. It is important to highlight that,
since the symmetric distributions are particular cases of the asymmetric ones,
their likelihoods can not be higher than the asymmetric likelihoods for the same
set data set. All proofs for this section are presented in the Appendix.
\subsection{Laplace Distribution}
\label{sec:optimization:laplace}
Using the functions defined in the constrained mixture and in the proof of
Theorem~\ref{thm:laplace}, the distribution-specific likelihood, given by
Equation~\eqref{eq:distribution_likelihood} can be written as:
\begin{equation}
  \label{eq:laplace_likelihood}
  \ln \mathcal L_\phi = |S| \ln \lambda + (|S_+|-|S_-|) \ln \alpha
  +\lambda\left(\alpha^{-1} \sum_{s_i \in S_-} (s_i-\mu) -
  \alpha \sum_{s_i \in S_+} (s_i-\mu) \right).
\end{equation}

Using $\mathcal L_p$ from Equation~\eqref{eq:common_likelihood} and the second
term in the previous equation, we can verify that
\begin{subequations}
\label{eq:entropy}
\begin{align}
  &|S_-|\ln p + |S_+| \ln q + (|S_+|-|S_-|)\ln \alpha
  \\
  &=\frac{|S_-|}{2} (\ln p + \ln q) + \frac{|S_+|}{2} (\ln p + \ln q)
  = \frac{|S|}{2} (\ln p + \ln q)
  \\
  \label{eq:kl_divergence}
  &= -|S| H(Be(0.5)) - |S| D_{KL}(Be(0.5)||Be(p)),
\end{align}
\end{subequations}
where $q = 1-p$, $Be(p)$ is the Bernoulli distribution, $H(\cdot)$ is the
entropy, and $D_{KL}(\cdot)$ is the Kullback-Leibler
divergence~\citep{kullback1951information}. Therefore, only the first and third
terms in Equation~\eqref{eq:laplace_likelihood} change with $\mu$ and $\lambda$.

Moreover, the likelihood term that depends only on $p$ decreases as $p$ moves
away from the symmetric version $p=0.5$. This can be viewed as an implicit
regularization of the asymmetry, since it comes directly from the distributions
defined in Section~\ref{sec:asymmetric_examples} and reduces the likelihood as
the asymmetry increases. Therefore, the distribution only becomes more
asymmetric whenever the likelihood gain in data fitting is higher than the loss
of becoming more asymmetric.

\begin{theorem}[Asymmetric Laplace Optimality]
  \label{thm:laplace_opt}
  Let $p \in (0,1)$ and $S = \{s_i\}$, $i \in \{1,2,\ldots,N\}, s_i \in \R$, be
  given. Let the pdf of the asymmetric Laplace distribution be given by
  Equation~\eqref{eq:asym_laplace_pdf}. Then the likelihood has an optimum where
  the partition $\mu^*$ is given by the weighted median, with samples in $S_-$
  and $S_+$ weighted by $\alpha^{-1}$ and $\alpha$, respectively, and
  \begin{equation*}
    \lambda^* = \frac{|S|}{\alpha \sum_{s_i \in S_+} (s_i-\mu) -
    \alpha^{-1} \sum_{s_i \in S_-} (s_i-\mu)},
  \end{equation*}
  where $S_- = \{s_i \in S | s_i < \mu^*\}$, $S_+ = \{s_i \in S | s_i >
  \mu^*\}$, and $\alpha = \sqrt{\frac{p}{1-p}}$.

  Furthermore, let $\mu^*_1$ and $\mu^*_2$, $\mu^*_1 < \mu^*_2$, be optimal
  partitions. Then there is no $s_i \in S$ such that $\mu^*_1 < s_i < \mu^*_2$,
  that is, all optimal partitions induce the same sets $S_-$ and $S_+$.
\end{theorem}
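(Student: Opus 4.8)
The plan is to fix $p$ (hence $\alpha=\sqrt{p/(1-p)}$), profile out $\lambda$, and reduce the entire optimization to minimizing a single convex function of $\mu$. First I would take logarithms directly in Equation~\eqref{eq:asym_laplace_pdf} and sum over the data, obtaining
\begin{equation*}
  \ln\mathcal L(\mu,\lambda \mid S) = |S|\ln\lambda - \lambda\,G(\mu) + c,
\end{equation*}
where $c = |S|(\ln\alpha - \ln(\alpha^2+1))$ is independent of $\mu$ and $\lambda$, and
\begin{equation*}
  G(\mu) = \alpha\sum_{s_i \in S_+}(s_i-\mu) - \alpha^{-1}\sum_{s_i \in S_-}(s_i-\mu)
  = \alpha\sum_{s_i \in S_+}|s_i-\mu| + \alpha^{-1}\sum_{s_i \in S_-}|s_i-\mu| \ge 0 .
\end{equation*}
This matches the observation around Equation~\eqref{eq:entropy} that the asymmetry-count terms carry no residual $\mu$-dependence, so only $G$ and the explicit $\lambda$-terms matter.

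For fixed $\mu$ with $G(\mu)>0$, setting $\partial\ln\mathcal L/\partial\lambda=0$ gives $\lambda^* = |S|/G(\mu)$, which is exactly the stated expression once the sign of each $(s_i-\mu)$ is accounted for; back-substitution yields the profile objective $|S|\ln|S| - |S| - |S|\ln G(\mu)$. Since $|S|>0$ and $G>0$, maximizing the likelihood is therefore equivalent to minimizing $G(\mu)$, and all remaining work concerns this piecewise-linear function.

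Next I would analyze $G$. On any open interval free of samples, $S_\pm$ are constant and $G'(\mu) = \alpha^{-1}|S_-| - \alpha|S_+|$; as $\mu$ crosses a sample of multiplicity $m$, a point moves from $S_+$ to $S_-$, so this slope jumps upward by $(\alpha+\alpha^{-1})m>0$. Hence $G$ is convex with nondecreasing slope, and its minimizers are the $\mu^*$ where the slope changes sign, i.e.\ where $\alpha^{-1}|S_-| = \alpha|S_+|$. This balancing of the weighted counts is precisely the weighted median with left samples weighted by $\alpha^{-1}$ and right samples by $\alpha$; equivalently, writing $G = (\alpha+\alpha^{-1})\sum_i \rho_p(s_i-\mu)$ for the pinball loss $\rho_p$ and using $\alpha^2 = p/(1-p)$, the minimizer is the $p$-quantile. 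This establishes both optimality conditions.

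Finally, for the uniqueness of the induced partition, suppose $\mu^*_1<\mu^*_2$ are both optimal. By convexity $G$ is constant on $[\mu^*_1,\mu^*_2]$, so its slope vanishes throughout the interior; but any sample strictly inside would force the slope to jump by a strictly positive amount across that point, contradicting constancy. Thus no sample lies in $(\mu^*_1,\mu^*_2)$, and every optimal $\mu^*$ induces the same $S_-$ and $S_+$. I expect the main obstacle to be the non-differentiability of $G$ exactly at the sample points, which are the candidate optima: the argument cannot use ordinary stationarity and must instead be carried through one-sided derivatives and the slope-jump computation, with careful bookkeeping of the $\ge$ versus $<$ tie-breaking in the definitions of $S_\pm$ at a mass point.
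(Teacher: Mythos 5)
Your proposal is correct, and it arrives at the same reduction as the paper: for fixed $p$ everything collapses to minimizing the side-weighted absolute-deviation function you call $G(\mu)$, which is exactly the paper's $\gamma(\mu)$ in Lemma~\ref{lem:laplace_convex}, with $\lambda^* = |S|/G(\mu^*)$ recovered from the first-order condition (your profiling of $\lambda$ first and the paper's observation that $\mu$ can be optimized independently of $\lambda$ are interchangeable, and your constant $c = |S|(\ln\alpha - \ln(\alpha^2+1)) = \tfrac{|S|}{2}(\ln p + \ln(1-p))$ agrees with the paper's identity~\eqref{eq:entropy}). Where you genuinely diverge is in how the key facts about this function are established. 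The paper proves convexity of $\gamma$ by an algebraic argument (introducing side indicators $\eta_i$ and verifying the convexity inequality, with strictness whenever a sample lies strictly between the two evaluation points), then \emph{cites} an external result to identify the minimizer as the weighted median, and gets partition uniqueness from the strict-convexity clause. You instead exploit the piecewise-linear structure directly: slope $\alpha^{-1}|S_-| - \alpha|S_+|$ on sample-free intervals, an upward slope jump of $(\alpha+\alpha^{-1})m$ at each sample of multiplicity $m$, hence convexity; minimizers are the sign-change points of the slope, i.e.\ the weighted median, equivalently the empirical $p$-quantile via your pinball-loss identity $G = (\alpha+\alpha^{-1})\sum_i \rho_p(s_i-\mu)$ (correct, since $\alpha+\alpha^{-1} = 1/\sqrt{p(1-p)}$); and uniqueness of the induced partition follows because constancy of $G$ on $[\mu^*_1,\mu^*_2]$ is incompatible with a strictly positive slope jump at an interior sample. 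Your route is self-contained (it proves the weighted-median optimality rather than citing it), makes the non-differentiability at sample points explicit and handled by one-sided derivatives, and adds the quantile interpretation absent from the paper; the paper's algebraic lemma, by contrast, avoids all casework on the piecewise structure. The one loose phrase in your writeup, ``where the slope changes sign, i.e.\ where $\alpha^{-1}|S_-| = \alpha|S_+|$,'' should be stated as left derivative $\le 0 \le$ right derivative (exact equality need not hold when the minimizer is a sample point), but you flag precisely this tie-breaking issue yourself, so it is a presentational matter rather than a gap.
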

\begin{proof}See Appendix.\end{proof}

It is important to highlight that we have to look at all possible partitions of
$S$, compute their optimal $\mu^*$ given by the median, and check whether it
induces the same partition. Since all optima induce the same partition, only one
such median induce the partition used to create it, with the other values
falling outside the required interval $\max S_- < \mu^* < \min S_+$.

Alternatively, if we consider $\mu$ fixed instead of $p$, we have shown in
Section~\ref{sec:example_laplace} that the asymmetric Laplace defines an
exponential family, which means that it has a conjugate prior given by
Equation~\eqref{eq:prior}, where $\eta$ and $A(\eta)$ are defined in
Equation~\eqref{eq:laplace_exp}.

\begin{theorem}[Asymmetric Laplace Conjugate Prior]
  \label{thm:laplace_prior}
  Let the asymmetric Laplace distribution be given by
  Equation~\eqref{eq:asym_laplace_pdf}, with exponential family functions given
  by Equation~\eqref{eq:laplace_exp}. Then its conjugate prior probability
  density function is given by
  \begin{equation*}
    f(p, \lambda; \nu, \chi) =
    G(\lambda \alpha;\nu, \chi_1)
    G(\lambda \alpha^{-1};\nu, \chi_2)
    B(p; \nu'),
  \end{equation*}
  where
  \begin{equation*}
    G(\Lambda; \alpha, \beta) =
    \frac{\beta^\alpha}{\Gamma(\alpha)}
    \Lambda^{\alpha-1} \exp\left(-\Lambda \beta\right)
  \end{equation*}
  is the gamma distribution, $\Gamma(\cdot)$ is the gamma function,
  \begin{equation*}
    B(p; \alpha) = \frac{1}{B(\alpha, \alpha)} p^{\alpha-1} (1-p)^{\alpha-1}
  \end{equation*}
  is the symmetric beta distribution, $B(\cdot)$ is the beta function, and
  $\alpha = \sqrt{\frac{p}{1-p}}$.
\end{theorem}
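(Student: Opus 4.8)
The plan is to start from the canonical conjugate-prior form in Equation~\eqref{eq:prior}, substitute the exponential-family functions from Equation~\eqref{eq:laplace_exp}, and regroup the result into the advertised product of two gamma kernels and one symmetric beta kernel. Because $A(\lambda,p) = -\ln\beta$, the factor $\exp(-\nu A(\eta))$ becomes $\beta^{\nu}$, so the prior is proportional to $\beta^{\nu}\exp(\eta_1\chi_1 + \eta_2\chi_2) = \beta^{\nu}\exp(-\lambda\alpha\chi_1 - \lambda\alpha^{-1}\chi_2)$, using $\eta_1 = -\lambda\alpha$ and $\eta_2 = -\lambda\alpha^{-1}$ from Equation~\eqref{eq:laplace_exp}.

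The key algebraic observation I would establish first is that $\beta$ can be written purely in the original parameters as $\beta = \lambda\sqrt{p(1-p)}$. Indeed, since $\alpha^2 = \frac{p}{1-p}$ forces $\frac{\alpha^2}{\alpha^2+1} = p$ and $\frac{1}{\alpha^2+1} = 1-p$, we get $\frac{\alpha}{\alpha^2+1} = \sqrt{\frac{\alpha^2}{(\alpha^2+1)^2}} = \sqrt{p(1-p)}$. Hence $\beta^{\nu} = \lambda^{\nu}\,(p(1-p))^{\nu/2}$, and the factor $(p(1-p))^{\nu/2}$ is immediately recognizable as the kernel of a symmetric beta $B(p;\nu')$.

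Second, to recover the two gamma factors I would split the remaining power of $\lambda$ symmetrically across $\alpha$, via $\lambda^{\nu} = (\lambda\alpha)^{\nu/2}(\lambda\alpha^{-1})^{\nu/2}$, so the prior becomes $\left[(\lambda\alpha)^{\nu/2}e^{-\lambda\alpha\chi_1}\right]\left[(\lambda\alpha^{-1})^{\nu/2}e^{-\lambda\alpha^{-1}\chi_2}\right](p(1-p))^{\nu/2}$. Each bracket is the kernel of a gamma distribution evaluated at $\lambda\alpha$ and $\lambda\alpha^{-1}$ respectively, with rates $\chi_1$ and $\chi_2$; the equal gamma shapes come precisely from splitting the exponent $\nu/2$ evenly. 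Reading off the shape by matching $\Lambda^{\text{shape}-1}$ and folding the constants $\Gamma(\cdot)$, $B(\cdot,\cdot)$, and $f(\chi,\nu)$ together then yields the stated form. The one point requiring care is whether one keeps the density on the natural-parameter space or pulls it back to $(\lambda,p)$ with the Jacobian $\left|\partial(\eta_1,\eta_2)/\partial(\lambda,p)\right| = \frac{\lambda}{p(1-p)}$; including it only shifts the gamma shapes and the beta parameter by fixed constants, which is what makes the beta parameter deserve the separate symbol $\nu'$.

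Finally, as the decisive check that this product really is a conjugate prior (its defining property), I would multiply it by the likelihood $\mathcal L \propto \beta^{|S|}\exp(-\lambda\alpha T_1 - \lambda\alpha^{-1}T_2)$, where $T_1 = \sum_{s_i\ge\mu}|s_i-\mu|$ and $T_2 = \sum_{s_i<\mu}|s_i-\mu|$ are the sufficient statistics, and verify that the posterior returns the same product form with updates $\chi_j \mapsto \chi_j + T_j$ and both $\nu$ and $\nu'$ increased by $|S|/2$. I expect the main obstacle to be purely the exponent bookkeeping: one must track the powers of $\lambda$, $p$, and $1-p$ (equivalently of $\lambda$, $\alpha$, and $\alpha^2+1$) simultaneously and confirm that the $(\alpha^2+1)$ contributions from $\beta^{|S|}$ are absorbed consistently into the beta parameter while the $\lambda$ and $\alpha$ contributions split evenly into the two gamma shapes. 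It is this simultaneous consistency, rather than any single identity, that makes the factorization close.
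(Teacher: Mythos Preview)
Your approach is essentially identical to the paper's: both start from Equation~\eqref{eq:prior}, use $-\nu A(\eta)=\nu\ln\beta$ together with the identity $\beta=\lambda\sqrt{p(1-p)}$ (the paper writes it as $\ln\beta=\ln\lambda+\tfrac12(\ln p+\ln(1-p))$), recognize $(p(1-p))^{\nu/2}$ as the symmetric beta kernel, and then split $\lambda^{\nu}=(\lambda\alpha)^{\nu/2}(\lambda\alpha^{-1})^{\nu/2}$ to obtain the two gamma kernels. Your additional remarks on the Jacobian of $(\lambda,p)\mapsto(\eta_1,\eta_2)$ and the explicit posterior-update check go beyond the paper's proof but are compatible with it; just be careful that in the canonical parametrization the update is $\nu\mapsto\nu+|S|$ (hence $\nu'\mapsto\nu'+|S|/2$), not $\nu\mapsto\nu+|S|/2$.
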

\begin{proof}See Appendix.\end{proof}

Since the prior for the Laplace distribution, in the format written in
Equation~\eqref{eq:laplace_pdf}, is the gamma distribution, and the prior for
$p$, which can be seen as a parameter in a Bernoulli distribution deciding in
which side of $\mu$ the data will fall, is a beta distribution, it is reasonable
to expect that the asymmetric Laplace prior has one gamma distribution for each
side and one beta distribution for the deciding parameters, with their
hyperparameters linked in a way that the final parameters always satisfy the
conditions for a constrained mixture.
\subsection{Normal Distribution}
\label{sec:optimization:normal}
Using the functions defined in the constrained mixture and in the proof of
Theorem~\ref{thm:normal}, the distribution-specific likelihood, given by
Equation~\eqref{eq:distribution_likelihood} can be written as:
\begin{equation}
  \label{eq:normal_likelihood}
    \ln \mathcal L_\phi = C - |S| \ln \sigma + (|S_+|-|S_-|) \ln \alpha
    -\frac{\sum_{s_i \in S_-} {(s_i-\mu)}^2}{2\sigma^2\alpha^2} -
    \frac{\alpha^2 \sum_{s_i \in S_+} {(s_i-\mu)}^2}{2\sigma^2},
\end{equation}
where $C$ is a constant.

Similarly to Equation~\eqref{eq:entropy}, we can show that the term associated
with $\ln \alpha$ does not depend on the partition, once we consider $\mathcal
L_p$. Therefore, only the other terms are used in the optimization.

\begin{theorem}[Asymmetric Normal Optimality]
  \label{thm:normal_opt}
  Let $p \in (0,1)$ and $S = \{s_i\}, i \in \{1,2,\ldots,N\}, s_i \in \R$, be
  given. Let the pdf of the asymmetric normal distribution be given by
  Equation~\eqref{eq:asym_normal_pdf}. Then the likelihood has a single optimum,
  where the optimal partition is given by
  \begin{equation*}
    \mu^* = \frac{\alpha^{-2} \sum_{s_i \in S_-} s_i + \alpha^2 \sum_{s_i \in
    S_+} s_i}{\alpha^{-2} |S_-| + \alpha^2 |S_+|}
  \end{equation*}
  and
  \begin{equation*}
    {\sigma^*}^2 = \frac{\alpha^{-2} \sum_{s_i \in S_-}
    {(s_i-\mu)}^2 + \alpha^2 \sum_{s_i \in S_+} {(s_i-\mu)}^2}{|S|},
  \end{equation*}
  where $S_- = \{s_i \in S | s_i < \mu^*\}$, $S_+ = \{s_i \in S | s_i >
  \mu^*\}$, and $\alpha = \sqrt{\frac{p}{1-p}}$.
\end{theorem}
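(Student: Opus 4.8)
The plan is to collapse the joint optimization over $(\mu,\sigma)$ into a one-dimensional convex problem in $\mu$ followed by a routine scale optimization in $\sigma$. First I would assemble the full log-likelihood from Equation~\eqref{eq:general likelihood}, substituting $\ln\mathcal L_\phi$ from Equation~\eqref{eq:normal_likelihood} and $\ln\mathcal L_p$ from Equation~\eqref{eq:common_likelihood}. The crucial preliminary is the analogue of Equation~\eqref{eq:entropy}: since $\alpha=\sqrt{p/(1-p)}$ is fixed, the term $(|S_+|-|S_-|)\ln\alpha$ from $\ln\mathcal L_\phi$ combines with $\ln\mathcal L_p=|S_-|\ln p+|S_+|\ln(1-p)$ to give the constant $\tfrac{|S|}{2}(\ln p+\ln(1-p))$, which does not depend on the induced partition. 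Together with the constant $-|S|\ln Z$ and the constant $C$, this leaves
\[
  \ln\mathcal L(\mu,\sigma) = \text{const} - |S|\ln\sigma - \frac{1}{2\sigma^2}\,g(\mu),
\]
where $g(\mu) = \alpha^2 \sum_{s_i \ge \mu}(s_i-\mu)^2 + \alpha^{-2}\sum_{s_i<\mu}(s_i-\mu)^2$, so that the only dependence on $(\mu,\sigma)$ is through $g(\mu)$ and $-|S|\ln\sigma$.

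Next I would note that for every fixed $\sigma>0$ the coefficient $1/(2\sigma^2)$ is positive, so maximizing over $\mu$ is equivalent to minimizing $g$, a subproblem that does not involve $\sigma$. Hence the optimal $\mu$ is the same for every $\sigma$, and the joint optimum must sit at the minimizer of $g$. Differentiating on any interval containing no sample gives $g'(\mu)=2\alpha^2\sum_{s_i\ge\mu}(\mu-s_i)+2\alpha^{-2}\sum_{s_i<\mu}(\mu-s_i)$, and setting $g'(\mu^*)=0$ yields exactly the stated weighted-mean formula, with $S_-$ and $S_+$ the sets $\mu^*$ induces. Substituting $\mu=\mu^*$ and maximizing the remaining $-|S|\ln\sigma-g(\mu^*)/(2\sigma^2)$ is the standard Gaussian scale computation: the stationarity condition gives the unique ${\sigma^*}^2=g(\mu^*)/|S|$, and a second-order check in the variable $\sigma^2$ confirms it is a maximum (and that the optimum is interior whenever the data are non-degenerate, i.e.\ $g(\mu^*)>0$).

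The main obstacle is the "single optimum" claim, because the partition $(S_-,S_+)$ changes discontinuously as $\mu$ sweeps past a sample value, so $g$ is only piecewise quadratic and one must rule out spurious stationary points at these breakpoints. The key fact I would establish is that $g$ is continuously differentiable: when $\mu$ crosses a sample $s_j$, the term that switches sides is $(s_j-\mu)^2$, whose derivative $-2(s_j-\mu)$ vanishes at $\mu=s_j$, so $g'$ is continuous across every breakpoint. On each piece $g''(\mu)=2\alpha^2|S_+|+2\alpha^{-2}|S_-|>0$, hence $g'$ is continuous and strictly increasing on all of \R; since $g'(\mu)\to-\infty$ as $\mu\to-\infty$ and $g'(\mu)\to+\infty$ as $\mu\to+\infty$, it has a unique zero. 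This makes $g$ strictly convex with a unique minimizer $\mu^*$, delivering a unique global optimum $(\mu^*,\sigma^*)$ and simultaneously showing that the self-referential formula for $\mu^*$ (through $S_\pm$) is well posed.

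Finally I would dispatch the edge case of samples with $s_i=\mu^*$: they contribute zero to both $g$ and $g'$, so they may be assigned to either side or excluded without changing anything, which is consistent with the strict inequalities $S_-=\{s_i<\mu^*\}$ and $S_+=\{s_i>\mu^*\}$ in the statement. With this, the derivation of the closed-form expressions and the global uniqueness are both complete.
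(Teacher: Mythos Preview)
Your proposal is correct and follows essentially the same route as the paper: both reduce the problem to minimizing the piecewise-quadratic $g(\mu)$ (the paper's $\gamma(\mu)$ in Lemma~\ref{lem:normal_convex}), establish its strict convexity by checking continuity of $g$ and $g'$ at sample points together with $g''>0$ on each piece, and then read off $\mu^*$ and $\sigma^*$ from the first-order conditions. Your argument for $C^1$-ness at breakpoints (the switching term $(s_j-\mu)^2$ has derivative vanishing at $\mu=s_j$) is slightly crisper than the paper's limit computation, and you add the existence argument via $g'\to\pm\infty$ and the edge-case discussion for $s_i=\mu^*$, but the overall structure is identical.
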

\begin{proof}See Appendix.\end{proof}

Similarly to the asymmetric Laplace, we have to look at all partitions and check
whether the optimal $\mu^*$ is valid for that partition.

Also similarly to the asymmetry Laplace, if we consider $\mu$ fixed instead of
$p$, we have shown in Section~\ref{sec:example_normal} that the asymmetric
normal defines an exponential family, which means that it has a conjugate prior
given by Equation~\eqref{eq:prior}, where $\eta$ and $A(\eta)$ are defined in
Equation~\eqref{eq:normal_exp}.

\begin{theorem}[Asymmetric Normal Conjugate Prior]
  \label{thm:normal_prior}
  Let the asymmetric normal distribution be given by
  Equation~\eqref{eq:asym_normal_pdf}, with exponential family functions given
  by Equation~\eqref{eq:normal_exp}. Then its conjugate prior probability
  density function is given by
  \begin{equation*}
    f(p, \sigma; \nu, \chi) =
    Ig(\sigma^2 \alpha^2;\nu_2, \chi_2)
    Ig(\sigma^2 \alpha^{-2};\nu_2, \chi_1)
    B(p; \nu_1),
  \end{equation*}
  where
  \begin{equation*}
    Ig(\Sigma; \alpha, \beta) = \frac{\beta^\alpha}{\Gamma(\alpha)}
    \Sigma^{-\alpha-1} \exp\left(-\frac{\beta}{\Sigma}\right)
  \end{equation*}
  is the inverse gamma distribution, $\Gamma(\cdot)$ is the gamma function,
  \begin{equation*}
    B(p; \alpha) = \frac{1}{B(\alpha, \alpha)} p^{\alpha-1} (1-p)^{\alpha-1}
  \end{equation*}
  is the symmetric beta distribution, $B(\cdot)$ is the beta function,
 $\alpha = \sqrt{\frac{p}{1-p}}$, $\nu_1 = 1+\nu/2$, and $\nu_2 = \nu/4-1$.
\end{theorem}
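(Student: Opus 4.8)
The plan is to start from the generic conjugate-prior form in Equation~\eqref{eq:prior} and substitute the exponential-family functions of the asymmetric normal from Equation~\eqref{eq:normal_exp}, then rewrite everything purely in terms of $(p,\sigma)$ and read off the claimed factorization. Since $\eta_1 = -\tfrac{1}{2\sigma^2\alpha^{-2}}$ and $\eta_2 = -\tfrac{1}{2\sigma^2\alpha^2}$, introducing the shorthand $\Sigma_1 = \sigma^2\alpha^{-2}$ and $\Sigma_2 = \sigma^2\alpha^2$ gives $\eta^T\chi = -\tfrac{\chi_1}{2\Sigma_1}-\tfrac{\chi_2}{2\Sigma_2}$, which is already the sum of two inverse-gamma exponents. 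The remaining work is to show that the factor $\exp(-\nu A)=\beta^\nu$ supplies exactly the power-law parts of two $Ig$ kernels of shape $\nu_2$ together with one symmetric $B(p)$ kernel of shape $\nu_1$, and to confirm that the single $\nu$ in Equation~\eqref{eq:prior} distributes into all three shape parameters as stated.

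For the second step I would expand $\beta^\nu$ using $\alpha^2 = \tfrac{p}{1-p}$ and $\alpha^2+1 = \tfrac{1}{1-p}$, which collapses $\beta$ to $\tfrac{2\sqrt{p(1-p)}}{\sigma}$ and hence $\beta^\nu \propto p^{\nu/2}(1-p)^{\nu/2}\,\sigma^{-\nu}$. The full kernel is then proportional to $p^{\nu/2}(1-p)^{\nu/2}\,\sigma^{-\nu}\exp(-\tfrac{\chi_1}{2\Sigma_1}-\tfrac{\chi_2}{2\Sigma_2})$. I would split the $\sigma$ power symmetrically across the two sides via $\sigma^{-\nu}=\Sigma_1^{-\nu/4}\Sigma_2^{-\nu/4}$ (using $\Sigma_1\Sigma_2=\sigma^4$), which matches the inverse-gamma power $\Sigma_i^{-\nu_2-1}$ precisely when $-\nu_2-1=-\nu/4$, i.e.\ $\nu_2=\nu/4-1$. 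The leftover factor $p^{\nu/2}(1-p)^{\nu/2}$ matches the symmetric beta kernel $p^{\nu_1-1}(1-p)^{\nu_1-1}$ exactly when $\nu_1-1=\nu/2$, i.e.\ $\nu_1=1+\nu/2$, giving both constants in the statement.

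The step I expect to be the crux is verifying that no spurious $\alpha$- (hence $p$-) dependence leaks out of the inverse-gamma factors and corrupts the beta factor. Rewriting each $Ig$ kernel back in $(p,\sigma)$, the power contributions $(\sigma^2\alpha^{-2})^{-\nu_2-1}$ and $(\sigma^2\alpha^{2})^{-\nu_2-1}$ carry $\alpha$-exponents $+2(\nu_2+1)$ and $-2(\nu_2+1)$, which cancel identically and leave the pure term $\sigma^{-4(\nu_2+1)}=\sigma^{-\nu}$; this symmetric cancellation is exactly what permits $\beta^\nu$ to decouple into a ``radial'' $\sigma$ piece and an ``asymmetry'' $p$ piece. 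The only residual bookkeeping is that the factor $\tfrac12$ in each $\eta_i$ rescales the inverse-gamma scale parameters by two relative to Equation~\eqref{eq:prior}, while the Jacobian and normalization are absorbed into $f(p,\sigma;\nu,\chi)$; since conjugacy is a statement about functional form under the bijection $(p,\sigma)\leftrightarrow(\Sigma_1,\Sigma_2)\leftrightarrow\eta$, these constants leave the factorization intact. This parallels the Laplace case in Theorem~\ref{thm:laplace_prior}, with gamma factors replaced by inverse gammas because here $\eta$ depends on $\sigma^2$ rather than on $\lambda$.
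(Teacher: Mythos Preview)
Your proposal is correct and follows essentially the same approach as the paper's proof: both start from Equation~\eqref{eq:prior}, substitute the exponential-family functions from Equation~\eqref{eq:normal_exp}, expand $\beta^\nu$ to obtain the factor $p^{\nu/2}(1-p)^{\nu/2}\sigma^{-\nu}$, and then split $\sigma^{-\nu}$ symmetrically as $(\sigma\alpha)^{-\nu/2}(\sigma\alpha^{-1})^{-\nu/2}$ (equivalently your $\Sigma_1^{-\nu/4}\Sigma_2^{-\nu/4}$) to read off two inverse-gamma kernels and one symmetric beta kernel with the stated shape parameters. Your explicit verification that the $\alpha$-powers from the two inverse-gamma factors cancel, and your remark on the $\chi_i\mapsto\chi_i/2$ rescaling, match the paper's treatment exactly.
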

\begin{proof}See Appendix.\end{proof}

Again, just like the asymmetric Laplace, the prior is in agreement with what is
expected, since the prior for a variance is the inverse gamma distribution and
the prior for $p$ is a beta distribution.
\subsection{Asymmetry Parameter}
\label{sec:optimization:asymmetry}
Sections~\ref{sec:optimization:laplace} and~\ref{sec:optimization:normal} showed
how $\mu$ and $\theta$ can be optimized in a closed form to maximize the
likelihood for a fixed $p$. Since $p$ is a single value, it can be optimized
efficiently with a hill-climbing algorithm.

Given a value of $p$, the log-likelihood can be written as in
Equation~\eqref{eq:general likelihood}. Let
\begin{equation*}
  L(p) =
  \begin{dcases}
    \ln \mathcal L(\mu^*, \theta^*, p | S), & p \in (0,1)
    \\
    -\infty, & \text{otherwise,}
  \end{dcases}
\end{equation*}
where $\mu^*$ and $\theta^*$ are the optimal values for a given $p$.
Let the initial estimate of $p$ be $p^{(0)} = 0.5$, the initial step
$\eta^{(0)} > 0$, the tolerance $\epsilon > 0$ and the adjustment $1 > \gamma >
0$ be given. Then the hill-climbing algorithm works as follows:
\begin{enumerate}
  \item Initialize $i = 0$ and $p^{(0)} = 0.5$.
  \item Let $p^{(i)}_- = p^{(i)} - \eta$ and $p^{(i)}_+ = p^{(i)} + \eta$.
  \item If $\eta^{(i)} < \epsilon$, stop.
  \item Let $L^{(i)} = L(p^{(i)})$, $L^{(i)}_- = L(p^{(i)}_-)$, and $L^{(i)}_+ =
    L(p^{(i)}_+)$.
  \item If $L^{(i)}_+ \ge L^{(i)}$, then $p^{(i+1)} = p^{(i)}_+$, $p^{(i+1)}_- =
    p^{(i)}$, $p^{(i+1)}_+ = p^{(i)}_+ + \eta^{(i)}$, and $\eta^{(i+1)} =
    \eta^{(i)}$. Go to step 4 with $i = i+1$.
  \item If $L^{(i)}_- \ge L^{(i)}$, then $p^{(i+1)} = p^{(i)}_-$, $p^{(i+1)}_+ =
    p^{(i)}$, $p^{(i+1)}_- = p^{(i)}_- - \eta^{(i)}$, and $\eta^{(i+1)} =
    \eta^{(i)}$. Go to step 4 with $i = i+1$.
  \item Let $\eta^{(i+1)} = \eta^{(i)} \gamma$. Go to step 2 with $i = i+1$.
\end{enumerate}

This simple algorithm keeps the best estimate of $p$ at $p^{(i)}$ and compares
it with its $\eta^{(i)}$ neighbors, moving to the direction that maximizes the
likelihood. If the central estimate is the better, the step is reduced and the
process is repeated until convergence.

If the asymmetric distribution is part of a mixture, as in the example in
Section~\ref{sec:applications:hmm}, then we must take certain precautions to
avoid prematurely choosing a value of $p$. We have found that fixing the value
of $p$ to $0.5$, such that the distribution behaves like its symmetric version,
until convergence of the likelihood, and then performing the hill-climbing every
time a maximum was being fit for the asymmetric distribution, thus allowing $p$
to change, provided very good results and was able to avoid poor minima due to
premature compromise of the value of $p$. Therefore, we first solve the
symmetric problem until convergence, which should have less local minima due to
less flexibility, then use its estimated parameters as initial conditions for
the asymmetric problem, guaranteeing that the likelihood can only increase.

\section{Applications of the Proposed Asymmetric Distributions}
\label{sec:applications}
To demonstrate the characteristics of the new distributions, we propose two
applications to compare the symmetric and asymmetric versions: one toy example
to understand the fundamentals and one real world example to explore deeper
characteristics of the distribution. Since the normal distribution is frequently
used, both applications will focus on it.

A standard basic problem in machine learning is performing a linear regression
to fit some data. Therefore the toy problem is composed of a linear regression,
where the noise can be asymmetric. In this case, we will show that the
asymmetric normal is able to consistently adapt to this asymmetry when it is
present, providing higher likelihoods.

We note that there are approaches that use asymmetric noise models, such as the
log-gamma distribution~\citep{bianco2005robust}, to perform the linear
regression, but these other distributions may be unknown to the user and may be
difficult to interpret. However, the normal distribution is very common and most
people are familiar with it, which makes the new asymmetric normal distribution
a good candidate for noise model, since each side of the partition can be
interpreted as a normal distribution.

The real world problem is given by learning a time-series using a hidden Markov
model, where the emission distributions have now the flexibility of being
asymmetric. We will show that this extra flexibility not only increases the
likelihood, but may be able to reduce the entropy of the model.
\subsection{Asymmetric Linear Regression}
\label{sec:applications:toy}
The standard linear regression problem is defined by finding a parameter vector
$\beta \in \R^M$ such that the relationship between an input $x \in \R^D$ and an
output $y \in \R$ can be described by
\begin{equation*}
  y = \beta^T \phi(x) + \epsilon, \quad
  \epsilon \sim N(0, \sigma^2),
\end{equation*}
where $\phi(x) \colon \R^D \to \R^M$ is a function that computes features of the
input and $N(\mu,\sigma^2)$ is a normal distribution with mean $\mu$ and
variance $\sigma^2$~\citep{bishop2006pattern}. One of the basic choices of
$\phi(x)$ is the linear function, given by $\phi(x) = [x, 1]$, such that $\beta$
gives the slop and offset of a straight line.

With the asymmetric normal distribution, introduced in
Section~\ref{sec:example_normal}, it is possible to generalize this model to
include asymmetric noise, such that the relationship between input and output
becomes
\begin{equation*}
  y = \beta^T \phi(x) + \epsilon, \quad
  \epsilon \sim N_a(0, \sigma^2, p),
\end{equation*}
where $N_a(\mu, \sigma^2, p)$ is an asymmetric normal with partition $\mu$,
underlying variance $\sigma^2$, and weighting $p$.

\begin{figure}[t]
  \centering
  \psfrag{x}[c][c]{\small $x$}
  \psfrag{y}[c][c]{\small $y$}
\includegraphics[width=0.48\linewidth]{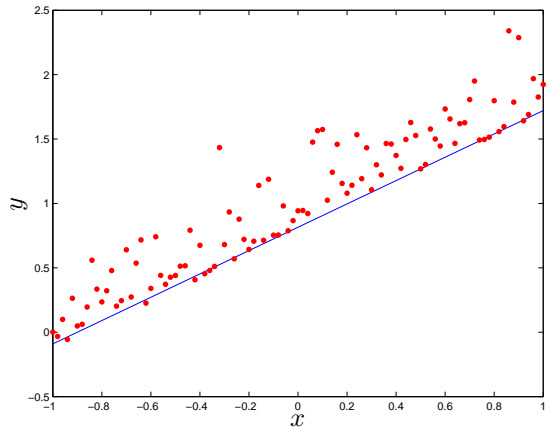}  \caption{Linear example with asymmetric noise with $p = 0.1$.}
  \label{fig:toy:example}
\end{figure}

Figure~\ref{fig:toy:example} shows an example of using the asymmetric normal
with $\sigma = 0.1$ and $p = 0.1$. The straight line is the noise-less
relationship and the dots are the noised samples obtained. Since $p < 0.5$, the
distribution creates less points with negative measurement errors and makes the
positive errors larger. From this image, it is clear that a standard normal is
not able to fit well the noise, since the region with high concentration of
points is close to the line, but it is concentrated on one side of the mean
noise.

We performed $100$ simulations for each value of $p \in \{0.1,0.2,\ldots,0.9\}$,
where in each run the values of $\beta$ were sampled uniformly in the interval
$[-1,1]$ and the underlying standard deviation $\sigma$ was set to $0.1$. The
inputs, which were shared by all simulations, were given by $101$ equidistant
points between $-1$ and $1$.

\begin{figure*}[t]
  \centering
  \begin{subfigure}[b]{0.48\linewidth}
    \psfrag{Symmetric}[c][c]{\small Symmetric}
    \psfrag{Asymmetric}[c][c]{\small Asymmetric}
\includegraphics[width=\linewidth]{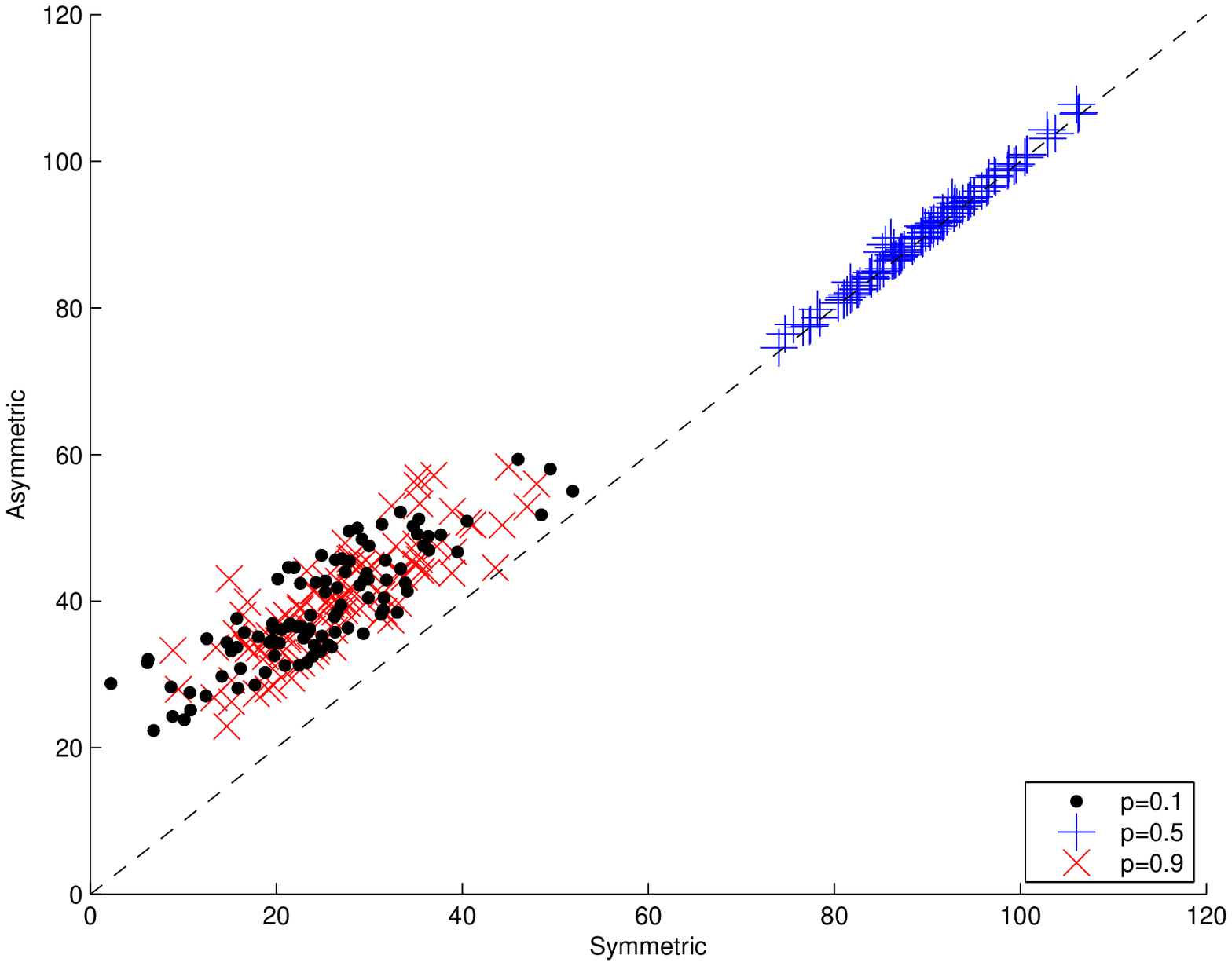}    \caption{Log-likelihood}
    \label{fig:toy:ll}
  \end{subfigure}
  ~
  \begin{subfigure}[b]{0.48\linewidth}
    \psfrag{p}[c][c]{\small $p$}
    \psfrag{ppred}[b][c]{\small $\hat p$}
\includegraphics[width=\linewidth]{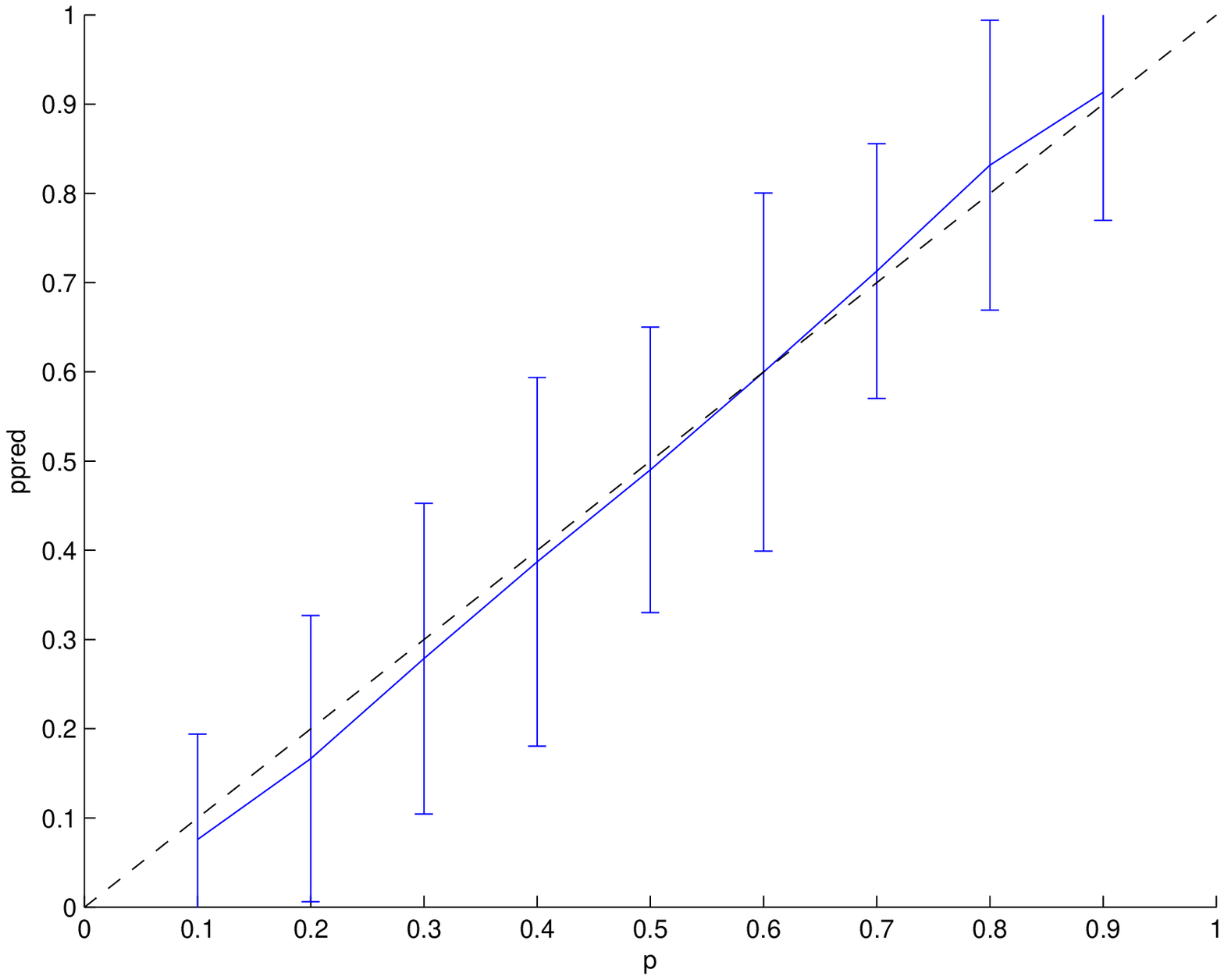}    \caption{Prediction of the asymmetry.}
    \label{fig:toy:p_pred}
  \end{subfigure}
  \caption{Results of learning a linear regression model to data with asymmetric
  noise.}
\end{figure*}

Figure~\ref{fig:toy:ll} shows the resulting likelihood of the fitted model,
where the dashed line represents equal likelihood. When $p=0.5$, both models
exhibit similar likelihoods, as we expected since this case describes the
symmetric normal distribution. Furthermore, since the symmetric normal is a
particular case of the asymmetric one, its likelihood can not be higher than the
likelihood of the asymmetric normal. In fact, the asymmetric normal has higher
likelihood in all simulations performed. However, when we set $p = 0.1$ or
$0.9$, both symmetric and asymmetric models have lower likelihood, with the
asymmetric one fitting better, as expected. The decrease in likelihood for the
asymmetric model can be explained in part by Equation~\eqref{eq:kl_divergence},
where we have shown that the model loses likelihood by making $p$ more distant
from $0.5$, while the decrease for the symmetric normal is due to incorrect
noise modelling.

It is important to highlight that, just like the terms $\ln \lambda$ in
Equation~\eqref{eq:laplace_likelihood} and $-\ln \sigma$ in
Equation~\eqref{eq:normal_likelihood} which prevents the error terms in the same
equations to have almost no weight, the cost in
Equation~\eqref{eq:kl_divergence} can be viewed as an implicit regularization
that prevents one side of the partition to have no weight, and this
regularization is inherent to the distributions defined in
Equations~\eqref{eq:asym_laplace_pdf} and~\eqref{eq:asym_normal_pdf} and is not
artificially imposed.

Moreover, there is a similarity between the resulting likelihoods for $p=0.1$
and $p=0.9$. This is expected, since there is a similarity between the two, with
$p=0.1$ favoring positive noises as much as $p=0.9$ favors negative ones.

Figure~\ref{fig:toy:p_pred} shows the correct and predicted values of $p$, again
with the dashed line representing the identity function, where predicted values
$\hat p$ are represented by their mean and $95\%$ confidence interval. The mean
prediction is clearly close to the true value, and the large variation of fitted
weights $\hat p$ is due to the small number of samples, since the model is more
flexible. However, when comparing the likelihood values for $p=0.5$ in
Figure~\ref{fig:toy:ll}, we see that the large spread of predicted values, from
$\hat p = 0.35$ to $\hat p = 0.65$ approximately, does not interfere in the
likelihood, as the value is similar to the normal that has $p = 0.5$.

Although it might seem that the results in Figure~\ref{fig:toy:p_pred} are not
the maximum likelihood estimates, since they may be far from the real parameter
$p$ used to create the noise, we remind the reader that they may differ for a
finite number of samples, just like any other estimate. For instance, for $M$
samples $x_i \sim N(\mu, \sigma^2)$ drawn from the normal distribution, the
maximum likelihood estimate for the mean is given by $\hat \mu = \sum_{i=1}^M
x_i/M$, but this estimate depends on the value of the specific $x_i$ sampled. If
we consider the uncertainty on $x_i$, it can be shown that the estimate is given
by $\hat \mu \sim N(\mu, \sigma^2/M)$ \citep{krishnamoorthy2006handbook}, which
specifies a random variable that only converges to the real value $\mu$ as $M
\to \infty$. Therefore, for finite number of samples, the parameter $\hat p$ may
differ from $p$ and still be a maximum likelihood estimate.

Therefore, we have shown that the asymmetric normal noise model is able to fit
as well as the symmetric normal when the noise is indeed symmetric, and
outperforms it when there is asymmetry in the noise. This motivates the use of
the asymmetric normal distribution as a generalization of the normal
distribution, thus being able to adapt to the observed noise asymmetry.
\subsection{Hidden Markov Model with Asymmetric Emissions}
\label{sec:applications:hmm}
While the creation of more flexible distributions by introducing the asymmetry
is in itself interesting, with the possibility of fitting different data while
keeping the interpretability, its use may also provide additional insights of
practical relevance. To
illustrate the application of the new distributions, we will use a hidden Markov
model (HMM) to fit a time-series.

A HMM with $K$ states is defined by the initial distribution on the states
$\pi$, the transition matrix between states $T$, and the parameters for each
distribution associated with each state $\theta_1,\ldots,\theta_K$. For the
normal distribution, $\theta_i$ is given by $\mu_i$ and $\sigma_i$, while for
the asymmetric version, $p_i$ is also included. In this application, we will
build two HMM, one with only symmetric and one with only asymmetric normal
distributions.

To improve the initial estimates for the HMM, we first fit the data using a
mixture model with weights $w$ and with the same parameters for the
emission distributions. Once the expectation maximization algorithm runs for 100
iterations, we set $\pi = w$ and $T = w 1_{1 \times K}$, such that every sample
has the same prior probability over the emission distributions. Additionally,
for the asymmetric version, we first fit the samples, both for the mixture and
the HMM, using the method described in Section~\ref{sec:optimization:asymmetry}.

The data used was the Dow Jones Industrial Average index (DJI) from its first
quotation, on Jan 29, 1985, to its last quotation of 2014, on Dec 31, 2014, with
the prices adjusted for dividends and splits, where we consider that its value
follows the lognormal distribution, as usual in the economics
field~\citep{aitchison1957lognormal}. Each sample is composed of the return over
investment's (ROI) logarithm for consecutive days, that is, the sample is given
by $s_i = \log(v_{i+1}/v_i)$, where $v_i$ is the quotation in the $i$-th day. If
either day of a pair does not have a quotation, what happens if one of them is
on a weekend for instance, then that sample is considered missing. Therefore,
the HMM has one state for each day between those dates.

The main motivation of using this kind of problem is that the hypothesis of
symmetry implied by the normal distribution may not reflect the reality. It is
well known that stock markets can have periods of very high or low return, which
sometimes characterize bull or bear markets~\citep{edwards2013technical}.
Therefore, we expect to see improvements by introducing an emission distribution
that is able to exhibit such asymmetric behavior.

Table~\ref{tab:application:likelihood} shows the final log-likelihood for the
samples with different number of possible states $K$. As expected, using the
asymmetric distribution provides greater likelihood due to its additional
flexibility. Moreover, increasing the number of states also increases the
difference in likelihood. Since the number of states in which the HMMs differ
the most is given by $K=5$, the subsequent analysis will consider only this
case.

\begin{table}[b]
  \centering
  \caption{Parameters' log-likelihood}
  \label{tab:application:likelihood}
  \begin{tabular}{|c|c|c|}
    \hline
    K & Symmetric & Asymmetric
    \\
    \hline
    2 & 19310.47 & 19310.91
    \\
    3 & 19480.27 & 19481.32
    \\
    4 & 19509.24 & 19514.80
    \\
    5 & 19519.82 & 19538.44
    \\
    \hline
  \end{tabular}
\end{table}

\begin{table}[tb]
  \centering
  \caption{Transition entropy (bits)}
  \label{tab:application:entropy}
  \begin{tabular}{|c|c|c|}
    \hline
    Source state & Symmetric & Asymmetric
    \\
    \hline
    1 & 0.1256 & 0.0847
    \\
    2 & 1.4972 & 0.4564
    \\
    3 & 0.4320 & 0.2108
    \\
    4 & 0.1815 & 0.1836
    \\
    5 & 1.1986 & 0.3565
    \\
    \hline
  \end{tabular}
\end{table}

\begin{figure*}[tb]
  \centering
  \begin{subfigure}[b]{0.48\linewidth}
    \psfrag{C1}[c][l]{\scriptsize C1}
    \psfrag{C2}[c][l]{\scriptsize C2}
    \psfrag{C3}[c][r]{\scriptsize C3}
    \psfrag{C4}[c][r]{\scriptsize C4}
    \psfrag{C5}[c][l]{\scriptsize C5}
    \psfrag{x}[c][c]{\small $x$}
    \psfrag{p(x)}[b][c]{\small $p(x)$}
\includegraphics[width=\linewidth]{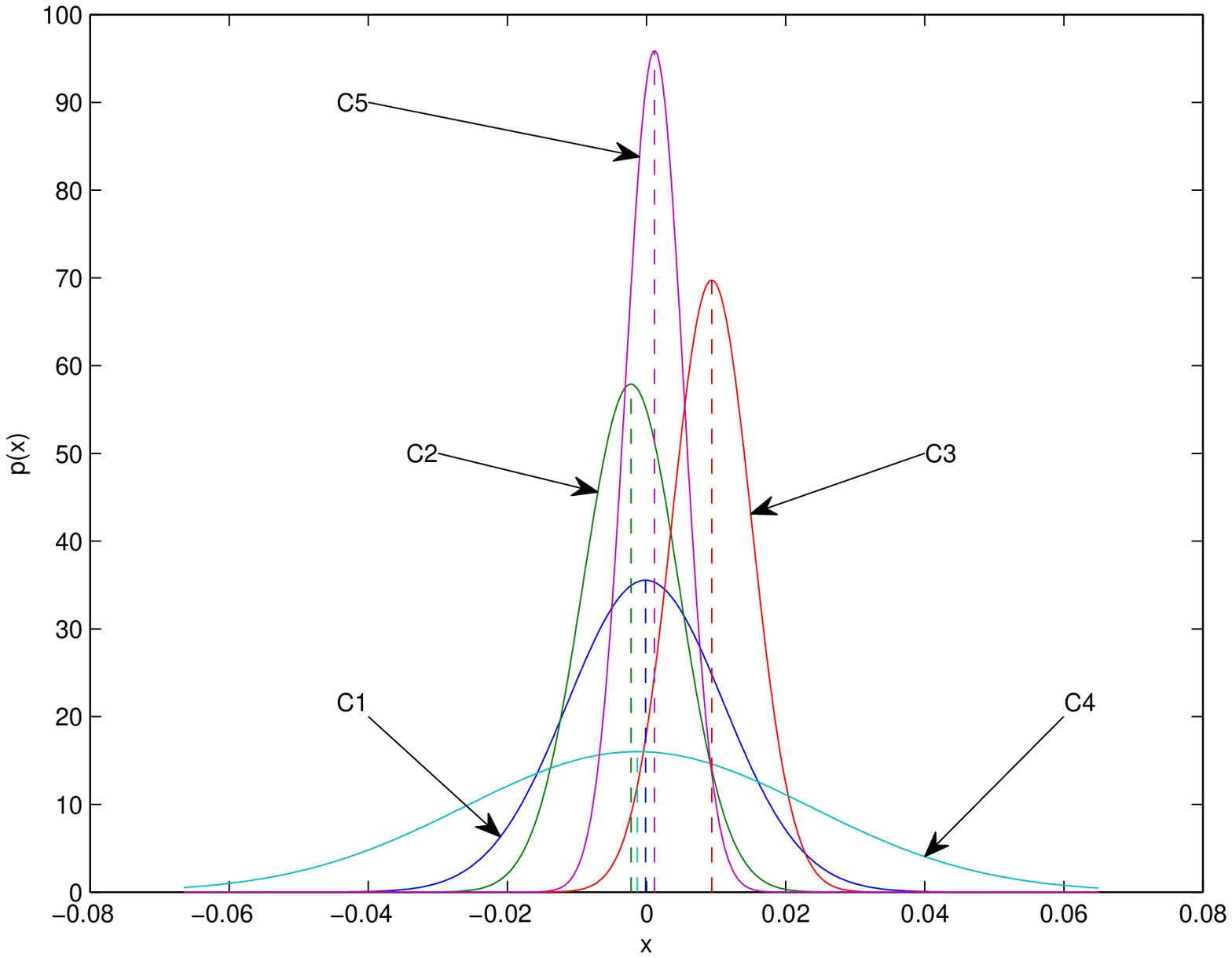}    \caption{Symmetric}
  \end{subfigure}
  ~
  \begin{subfigure}[b]{0.48\linewidth}
    \psfrag{C1}[c][r]{\scriptsize C1}
    \psfrag{C2}[c][l]{\scriptsize C2}
    \psfrag{C3}[c][r]{\scriptsize C3}
    \psfrag{C4}[c][l]{\scriptsize C4}
    \psfrag{C5}[c][l]{\scriptsize C5}
    \psfrag{x}[c][c]{\small $x$}
    \psfrag{p(x)}[b][c]{\small $p(x)$}
\includegraphics[width=\linewidth]{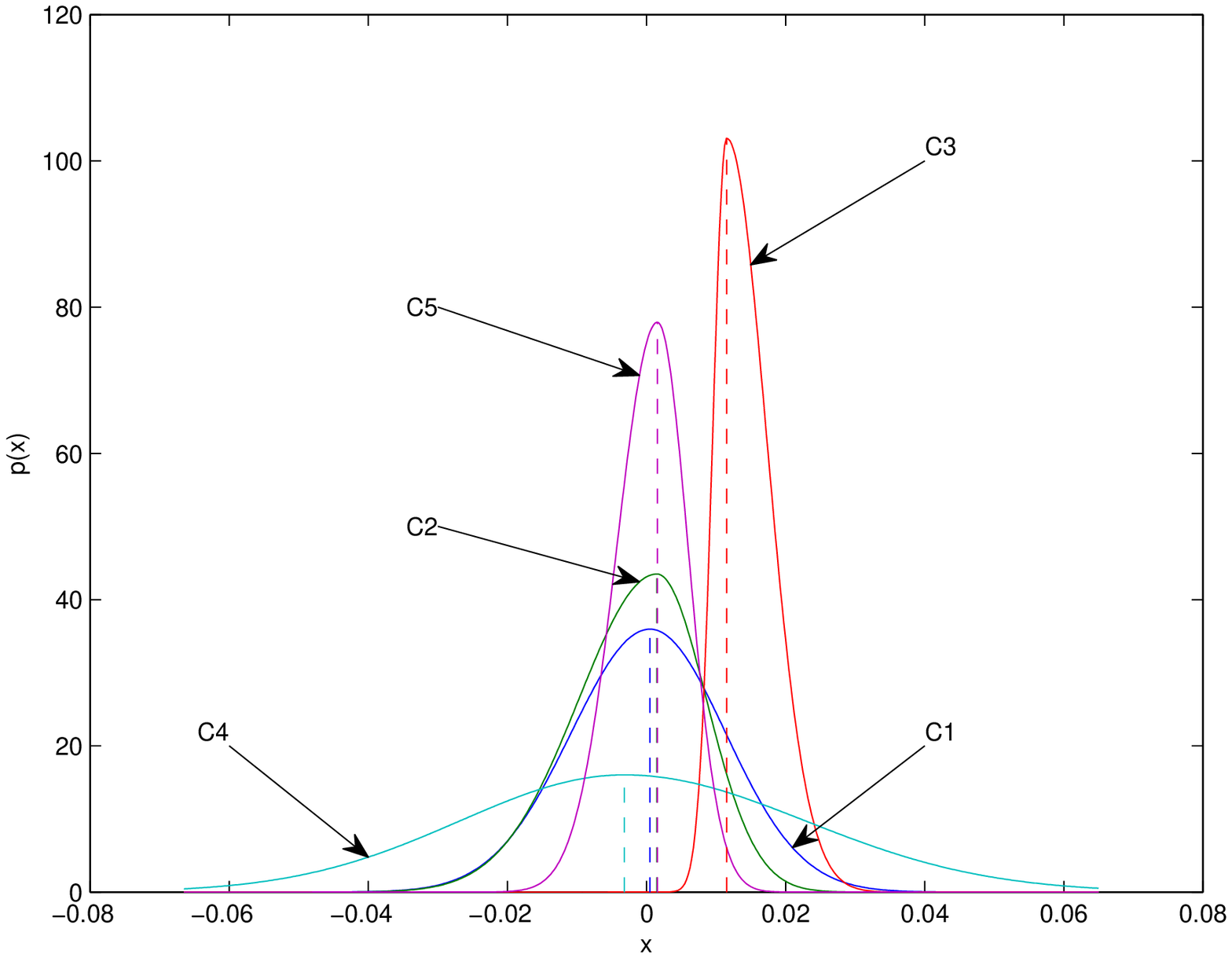}    \caption{Asymmetric}
  \end{subfigure}
  \caption{Probability density function for each emission distribution.}
  \label{fig:application:values}
\end{figure*}

Figure~\ref{fig:application:values} shows the emission distributions for each
HMM, with the mode dashed to highlight the asymmetry. While some asymmetries are
more subtle, like in components C4 ($p=0.476$) and C2 ($p=0.510$), others are
more noticeable, like C1 ($p=0.612$) and C5 ($p=0.570$). In special, the
component C3 has the largest asymmetry of all, with $p=0.260$.

Since the shape of the base distribution, in this case the normal distribution,
has been preserved in each side, the weight for each case can be used to provide
some additional insight into the state. For example, the state associated with
the component C3 is considerably certain that the index will rise ($x > 0$),
which none of the emissions in the symmetric case indicates.

While the increased likelihood and the presence of asymmetry are expected from
using a more general version of the distributions, other interesting and
potentially useful results appear when we analyze the distribution over states.

\begin{figure*}[t]
  \centering
  \begin{subfigure}[b]{0.48\linewidth}
    \psfrag{Source}[b][c]{\small Source}
    \psfrag{Target}[t][c]{\small Target}
\includegraphics[width=\linewidth]{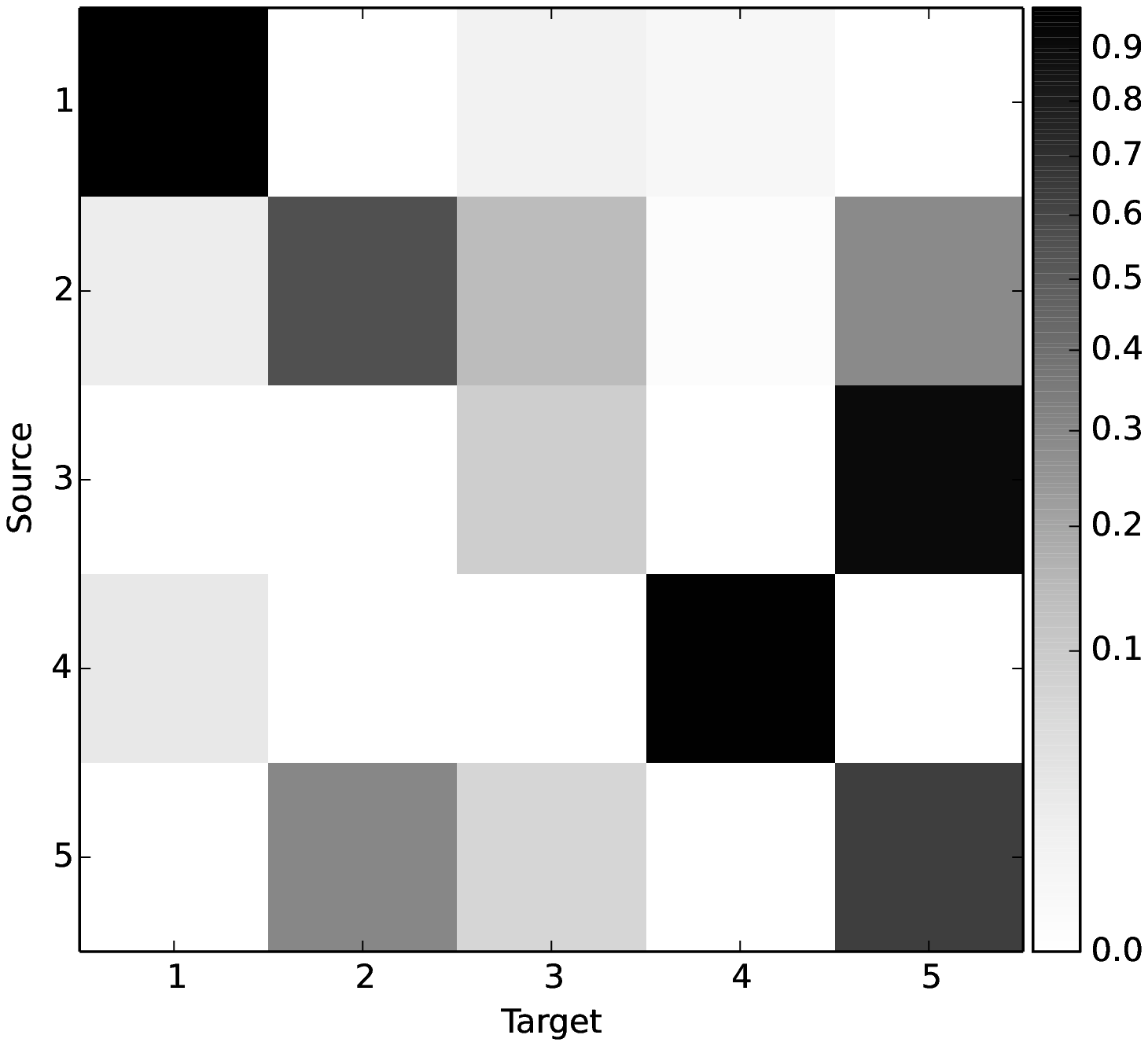}    \caption{Symmetric}
  \end{subfigure}
  ~
  \begin{subfigure}[b]{0.48\linewidth}
    \psfrag{Source}[b][c]{\small Source}
    \psfrag{Target}[t][c]{\small Target}
\includegraphics[width=\linewidth]{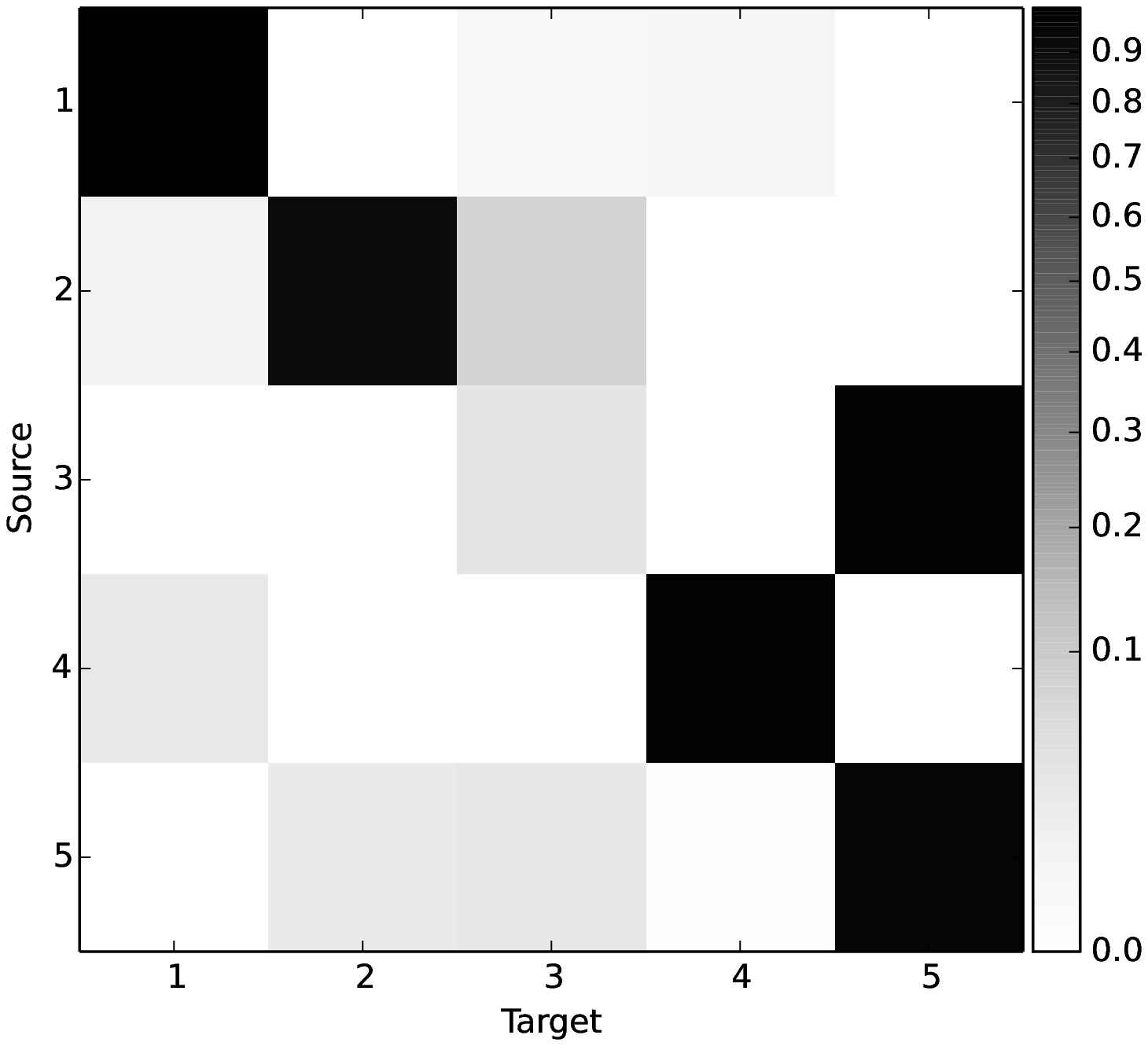}    \caption{Asymmetric}
  \end{subfigure}
  \caption{Transition probabilities of the HMM states using the symmetric and
  asymmetric distributions, with darker having higher probability.}
  \label{fig:application:transition}
\end{figure*}

When we evaluate the transition probabilities for each state, shown in
Figure~\ref{fig:application:transition}, it becomes very clear that the
transitions for the asymmetric version are usually much less ambiguous. To
evaluate this quantitatively, Table~\ref{tab:application:entropy} shows the
entropy of the transitions out of each state, with the maximum entropy being
given by $\log_2 5 = 2.3219$ bits.

Except for the fourth state, which suffered a minor increase in entropy of
$1.2\%$ and had no noticeable difference in
Figure~\ref{fig:application:transition}, all other transitions reduced the
entropy considerably, from $32.6\%$ to $70.2\%$, with clear differences in the
transition.

\begin{figure*}[t]
  \centering
  \begin{subfigure}[b]{0.48\linewidth}
    \psfrag{Histogram with 5 components and missing data}[c][c]{\small Histogram
    with missing data}
    \psfrag{Histogram with 5 components}[c][c]{\small Histogram without missing
    data}
    \psfrag{Normalized entropy}[c][c]{\footnotesize Normalized entropy}
\includegraphics[width=\linewidth]{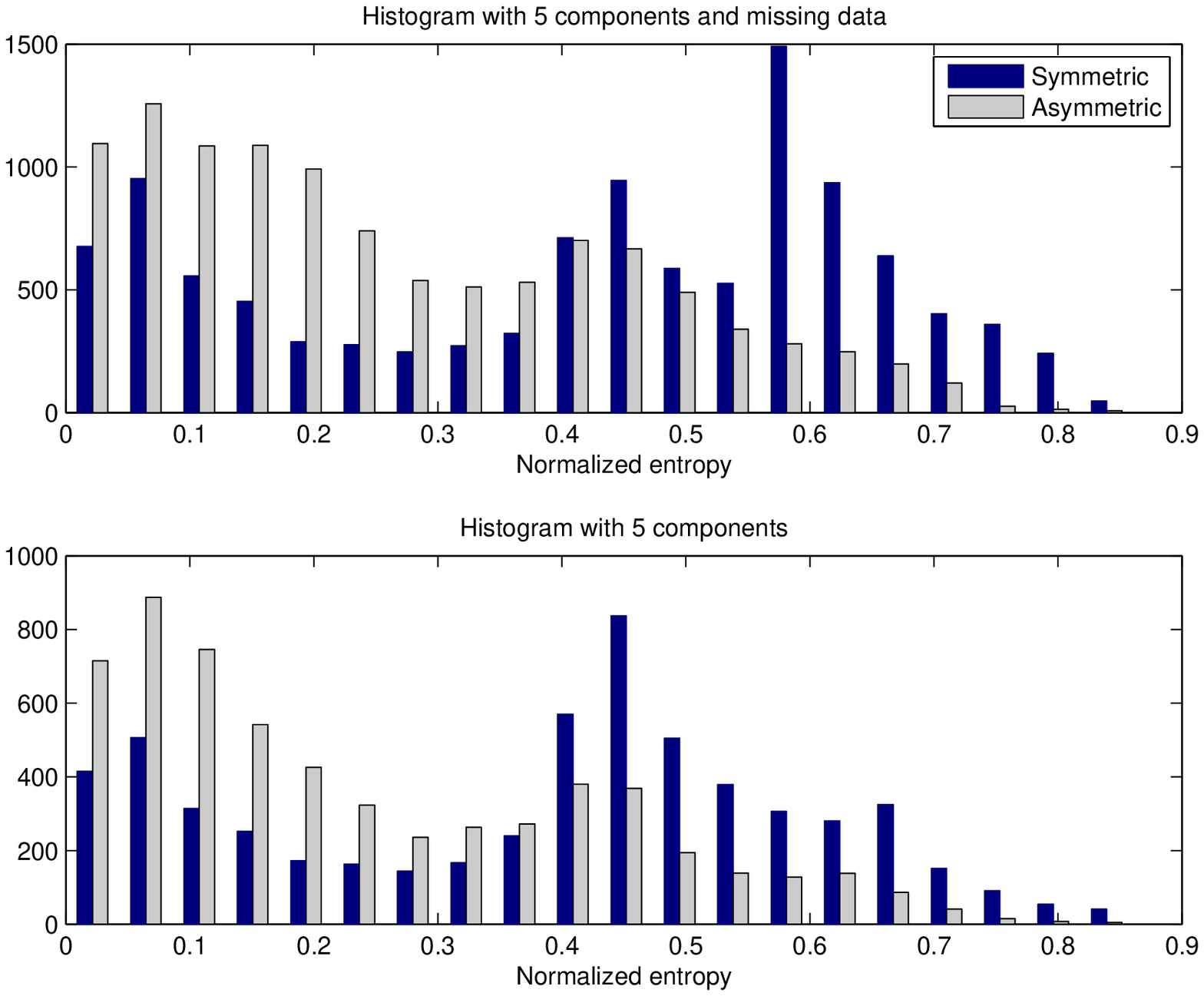}    \caption{Entropy histogram}
    \label{fig:application:entropy:hist}
  \end{subfigure}
  ~
  \begin{subfigure}[b]{0.48\linewidth}
    \psfrag{Normalized entropy asymmetric}[b][c]{\small Normalized entropy
    asymmetric}
    \psfrag{Normalized entropy symmetric}[t][c]{\small Normalized entropy
    symmetric}
\includegraphics[width=\linewidth]{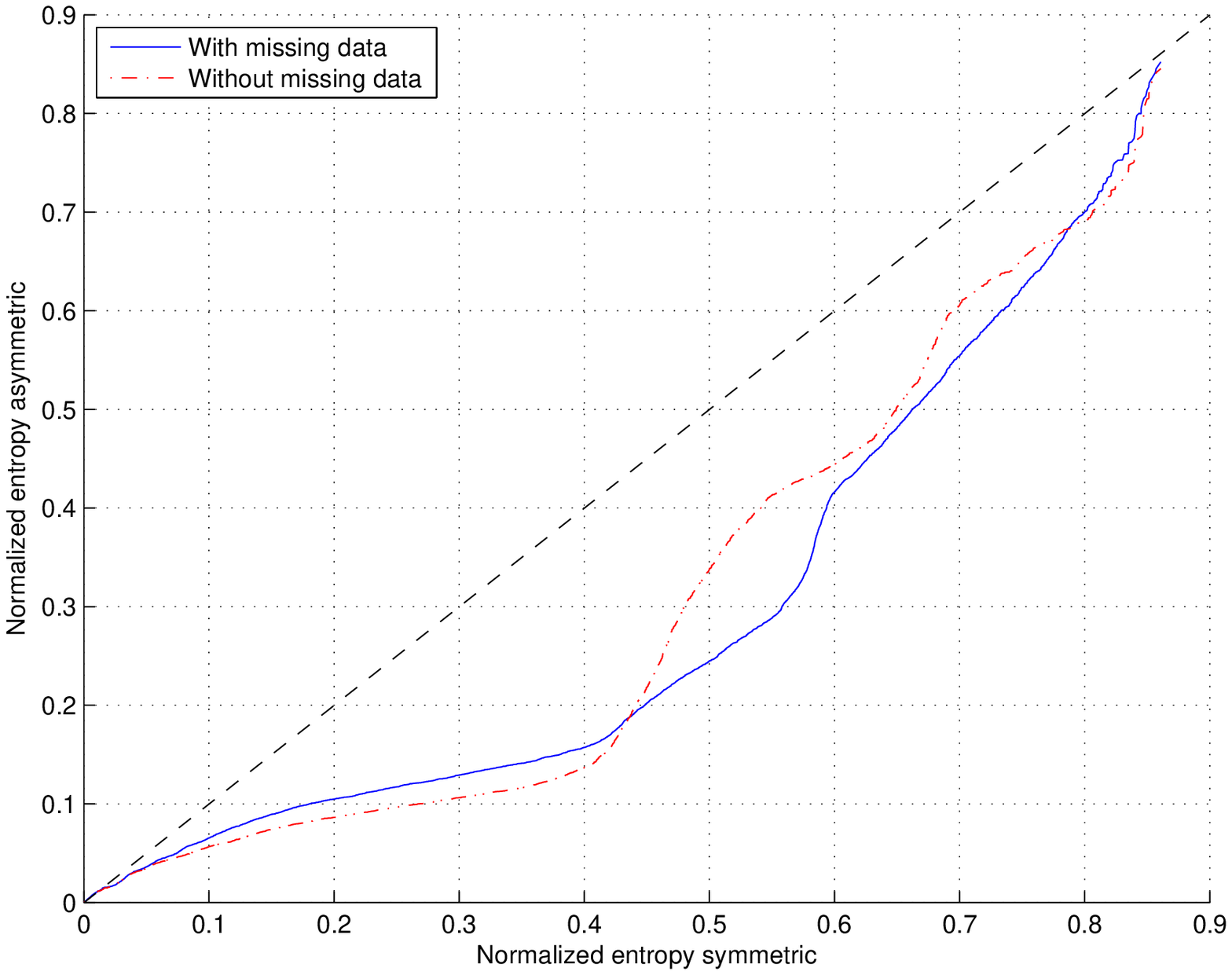}    \caption{Entropy QQ plot}
    \label{fig:application:entropy:qq}
  \end{subfigure}
  \caption{Normalized entropy of the HMM states with and without the missing
  data.}
  \label{fig:application:entropy}
\end{figure*}

This reduced entropy also occurs in the states themselves, as shown in
Figure~\ref{fig:application:entropy}. Figure~\ref{fig:application:entropy:hist}
shows the histogram of normalized entropies, which is the entropy divided by the
maximum entropy, for both HMMs and considering the state of missing data or not.
In both cases, the asymmetric version has considerably more states with lower
entropy than the symmetric version. Note also that the asymmetric version
appears to suffer less from missing data, while the symmetric version has a
spike around $0.6$ that does not occur without considering these states.

To emphasize the difference, Figure~\ref{fig:application:entropy:qq} shows the
entropy QQ plot, which is composed of plotting the normalized entropy quantiles
of each HMM's states, with the dashed line representing the identity. From this
figure, we note that the symmetric HMM's states indeed have higher entropy than
the ones from the asymmetric, with the first reaching normalized entropy 0.4
before the latter gets $0.2$, and a quantile with asymmetric distributions
almost always has less entropy than its equivalent symmetric, with the only
exceptions being the first few quantiles with very low entropy. Additionally,
this figure also shows that the curves that considers the missing data is close
to the one that does not, also indicating that the asymmetric version has good
performance despite this lack of information.

\section{Conclusion}
\label{sec:conclusion}
In this paper, we have introduced the concept of a constrained mixture and
provided two examples of how it can be used with the Laplace and normal
distributions to create new asymmetric distributions. The new distributions were
shown to generalize their underlying distribution while keeping important
properties, such as belonging to the exponential family and having maximum
likelihood estimates and conjugate priors with known closed-form expressions.
Moreover, the distributions were shown to have an inherent regularization term,
that is, a regularization that comes directly from the likelihood and not an
imposed cost, that penalizes the asymmetry, such that the distribution avoids
unnecessarily deforming the symmetric underlying distribution.

One of the new distributions, the asymmetric normal distribution, was compared
to the symmetric version in a regression example with asymmetric noise. This
allowed a better understanding of how the asymmetric distributions operate and
showed that, since the symmetric versions are particular cases of the asymmetric
distributions, the asymmetric ones must have higher likelihood.

The asymmetric and symmetric normal distributions were also compared when used
for emissions in a hidden Markov model (HMM) for a stock index. Results show
that, as one would expect, the additional flexibility of the asymmetry allowed
the distribution to better fit the data, providing increased likelihood and with
larger differences as more states were used.

A positive consequence of this flexibility and better fitting was additional
certainty in the states and their transitions. We have shown that, when the HMM
had 5 states, most probability distributions over the states had a considerable
reduction in their entropy even when missing data is considered. Moreover,
although one transition distribution, which already exhibited low entropy, had
its entropy increased by $1.2\%$, all other transitions had reduced entropy,
losing from $32.6\%$ to $70.2\%$ of their values, and the largest transition
entropy is less than $20\%$ of the maximum entropy, compared to $64.5\%$ for the
symmetric version.

Future investigations involve analyzing if it is possible to know the maximum
likelihood estimates and conjugate priors and their closed-form expressions for
the Laplace and normal distributions when the domain split does not occur at the
mode. If so, the effect of using the constrained mixture in other distributions
of the exponential family and the use of multiple segments should be
investigated. Besides this theoretical research, the use of asymmetry to
characterize loss functions in machine learning is of interest, since it can
make the system focus more on predicting low or high values.

\acks{The authors would like to thank CNPq for the financial support.}

\appendix
\section{Proof of Theorem~\ref{thm:laplace}}
\begin{proof}
  Constraint~\ref{const:continuity} is trivially satisfied, since both sides
  converge to $\beta$. From Constraint~\ref{const:normalization}, one has that
  \begin{align*}
    \int_{-\infty}^\infty \psi(x;\mu,\lambda,p) \dx
    &= \beta \left(
      \int_{-\infty}^\mu \exp(\lambda\alpha^{-1}(x-\mu)) \dx
      +\int_\mu^\infty \exp(-\lambda\alpha(x-\mu)) \dx
    \right)
    \\
    &=\beta \left(\frac{\alpha}{\lambda} + \frac{1}{\lambda\alpha}\right)
    = \beta \frac{\alpha^2+1}{\lambda \alpha} = 1,
  \end{align*}
  which is satisfied by the definition of $\beta$.

  From Constraint~\ref{const:weight}, one has that
  \begin{align*}
    \int_{-\infty}^\mu \psi(x;\mu,\lambda,p) \dx
    &= \beta \int_{-\infty}^\mu \exp(\lambda\alpha^{-1}(x-\mu)) \dx
    \\
    &= \beta \frac{\alpha}{\lambda}
    = \frac{\lambda \alpha}{\alpha^2+1} \frac{\alpha}{\lambda}
    = \frac{\alpha^2}{\alpha^2+1}
    = \frac{\left(\frac{p}{1-p}\right)}{\left(\frac{p}{1-p}\right)+1}
    = p,
  \end{align*}
  which is satisfied by the definition of $\alpha$ and $\beta$.

  Finally, to satisfy Constraint~\ref{const:mixture}, let
  $\Theta_-(\mu,\lambda,p) = [\mu, \lambda\alpha^{-1}]$ and
  $\Theta_+(\mu,\lambda,p) = [\mu, \lambda\alpha]$. Then
  \begin{align*}
    \psi(x;\mu,\lambda,p)
    &= \beta \exp(-\lambda\alpha (x-\mu))\I[x \ge \mu]
    +\beta \exp(-\lambda\alpha^{-1} (\mu-x))\I[x < \mu]
    \\
    &= \frac{2 \beta}{\lambda \alpha} \phi_+(x;\Theta_+(\cdot))
    + \frac{2 \beta \alpha}{\lambda} \phi_-(x;\Theta_-(\cdot))
    \\
    &=
    \frac{2}{\alpha^2+1} \phi_+(x;\Theta_+(\cdot)) +
    \frac{2 \alpha^2}{\alpha^2+1} \phi_-(x;\Theta_-(\cdot))
    \\
    &=
    2(1-p) \phi_+(x;\Theta_+(\cdot)) + 2 p \phi_-(x;\Theta_-(\cdot)),
  \end{align*}
  which sets $Z = 1/2$.
\end{proof}
\section{Proof of Theorem~\ref{thm:normal}}
\begin{proof}
  Constraint~\ref{const:continuity} is trivially satisfied, since both sides
  converge to $\beta$. From Constraint~\ref{const:normalization}, one has that
  \begin{align*}
    \int_{-\infty}^\infty \psi(x;\mu,\sigma,p) \dx
    &= \beta \left(
      \int_{-\infty}^\mu \Phi\left(\frac{x-\mu}{\sigma \alpha}\right) \dx
      +\int_\mu^\infty \Phi\left(\frac{x-\mu}{\sigma \alpha^{-1}}\right) \dx
    \right)
    \\
    &= \frac{\beta}{\sqrt{2\pi}} \left(
      \int_{-\infty}^\mu \exp\left(-\frac{{(x-\mu)}^2}{2 \sigma^2
      \alpha^2}\right) \dx
      +\int_\mu^\infty \exp\left(-\frac{{(x-\mu)}^2}{2 \sigma^2
      \alpha^{-2}}\right) \dx
    \right)
    \\
    &= \frac{\beta}{\sqrt{2\pi}} \left(
      \int_\mu^\infty \exp\left(-\frac{{(x-\mu)}^2}{2 \sigma^2
      \alpha^2}\right) \dx
      +\int_\mu^\infty \exp\left(-\frac{{(x-\mu)}^2}{2 \sigma^2
      \alpha^{-2}}\right)
      \dx
    \right)
    \\
    &= \frac{\beta}{\sqrt{2\pi}} \frac{\sqrt{\pi}}{2} \left(
      \sqrt{2}\alpha\sigma \left.\text{erf}
      \left(\frac{x-\mu}{\sqrt{2}\sigma\alpha}\right)\right|_\mu^\infty
      +\frac{\sqrt{2}\sigma}{\alpha} \left.\text{erf}
      \left(\frac{x-\mu}{\sqrt{2}\sigma\alpha^{-1}}\right)\right|_\mu^\infty
    \right)
    \\
    &= \frac{\beta}{2} \left(\alpha \sigma + \frac{\sigma}{\alpha}\right)
    = \frac{\beta\sigma}{2} \left(\frac{\alpha^2+1}{\alpha}\right) = 1,
  \end{align*}
  which is satisfied by the definition of $\beta$, where $\text{erf}(\cdot)$ is
  the error function.

  From Constraint~\ref{const:weight}, one has that
  \begin{align*}
    \int_{-\infty}^\mu \psi(x;\mu,\sigma,p) \dx
    &= \beta \int_{-\infty}^\mu \Phi\left(\frac{x-\mu}{\sigma\alpha}\right)
    \dx
    \\
    &= \frac{\beta}{\sqrt{2\pi}} \int_{-\infty}^\mu
    \exp\left(-\frac{{(x-\mu)}^2}{2\sigma^2\alpha^2}\right) \dx
    \\
    &= \frac{\beta}{\sqrt{2\pi}} \frac{\sqrt{\pi}}{2}
    \sqrt{2}\alpha\sigma \left.\text{erf}
    \left(\frac{x-\mu}{\sqrt{2}\sigma\alpha}\right)\right|_\mu^\infty
    \\
    &= \frac{\beta}{2} \alpha \sigma
    = \frac{2 \alpha}{2 \sigma (\alpha^2+1)} \alpha \sigma
    = \frac{\left(\frac{p}{1-p}\right)}{\left(\frac{p}{1-p}\right)+1}
    = \frac{\alpha^2}{\alpha^2+1}
    = p,
  \end{align*}
  which is satisfied by the definition of $\alpha$ and $\beta$.

  Finally, to satisfy Constraint~\ref{const:mixture}, let
  $\Theta_-(\mu,\lambda,p) = [\mu, \sigma\alpha]$ and $\Theta_+(\mu,\lambda,p) =
  [\mu, \sigma\alpha^{-1}]$. Then
  \begin{align*}
    \psi(x;\mu,\sigma,p)
    &=
    \beta \Phi\left(\frac{x-\mu}{\sigma\alpha^{-1}}\right)\I[x \ge \mu] +
    \beta \Phi\left(\frac{x-\mu}{\sigma\alpha}\right)\I[x < \mu]
    \\
    &= \beta \sigma \alpha^{-1} \phi_+(x;\Theta_+(\cdot)) +
    \beta \sigma \alpha \phi_+(x;\Theta_+(\cdot))
    \\
    &=
    \frac{2}{\alpha^2+1} \phi_+(x;\Theta_+(\cdot)) +
    \frac{2 \alpha^2}{\alpha^2+1} \phi_-(x;\Theta_-(\cdot))
    \\
    &=
    2(1-p) \phi_+(x;\Theta_+(\cdot)) + 2 p \phi_-(x;\Theta_-(\cdot)),
  \end{align*}
  which sets $Z = 1/2$.
\end{proof}
\section{Proof of lemma for Theorem~\ref{thm:laplace_opt}}
\begin{lemma}
  \label{lem:laplace_convex}
  Let $p \in (0,1)$ and $S = \{s_i\}, i \in \{1,2,\ldots,N\}, s_i \in \R$, be
  given. Let the pdf of the asymmetric Laplace distribution be given by
  Equation~\eqref{eq:asym_laplace_pdf}. Then the function
  \begin{equation*}
    \gamma(\mu) = \alpha \sum_{s_i \in S_+} (s_i-\mu) -
    \alpha^{-1} \sum_{s_i \in S_-} (s_i-\mu),
  \end{equation*}
  where $S_- = \{s_i \in S | s_i < \mu\}$, $S_+ = \{s_i \in S | s_i >
  \mu\}$, and $\alpha = \sqrt{\frac{p}{1-p}}$, is convex.

  Furthermore, let $\mu,\mu' \in \R,\mu < \mu'$. If there is some $s_i \in S$
  such that $\mu < s_i < \mu'$, then $\gamma(t\mu+(1-t)\mu') <
  t\gamma(\mu)+(1-t)\gamma(\mu')$ for all $t \in (0,1)$.
\end{lemma}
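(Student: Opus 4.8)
The plan is to recognize $\gamma$ as a sum of one-dimensional convex functions, one per sample, and then reduce both claims to elementary facts about the absolute-value function. First I would rewrite the contribution of each $s_i$ in a single form that does not branch on the sign of $s_i - \mu$. The contribution is $\alpha(s_i - \mu)$ when $s_i > \mu$, equals $\alpha^{-1}(\mu - s_i)$ when $s_i < \mu$, and vanishes when $s_i = \mu$; a direct check of the two cases shows that each summand equals
\[
  g_i(\mu) = \frac{\alpha + \alpha^{-1}}{2}\,|\mu - s_i|
  + \frac{\alpha^{-1} - \alpha}{2}\,(\mu - s_i).
\]
Summing over $i$ then gives
\[
  \gamma(\mu) = \frac{\alpha + \alpha^{-1}}{2}\sum_{i}|\mu - s_i|
  + \frac{\alpha^{-1} - \alpha}{2}\sum_{i}(\mu - s_i),
\]
which separates $\gamma$ into a positive multiple of a sum of absolute values plus an affine term.

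For the convexity claim, I would observe that the second sum is affine in $\mu$, hence convex, while each $|\mu - s_i|$ is convex and the coefficient $\tfrac{\alpha + \alpha^{-1}}{2}$ is strictly positive because $\alpha > 0$. A nonnegative combination of convex functions is convex, so $\gamma$ is convex on all of $\R$.

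For the strict inequality, the affine term contributes equality in any convexity comparison, so it suffices to establish the strict inequality for $\sum_i |\mu - s_i|$, and for that it is enough that one summand be strictly convex along the chord. Here I would invoke the standard V-shape argument: if $\mu < s_i < \mu'$ and $\mu_t = t\mu + (1-t)\mu'$, then $\mu_t \in (\mu, \mu')$ for every $t \in (0,1)$, and since $|\,\cdot - s_i|$ is affine on each of $[\mu, s_i]$ and $[s_i, \mu']$ but undergoes a strict slope increase from $-1$ to $+1$ at the interior point $s_i$, the chord joining $(\mu, |\mu - s_i|)$ and $(\mu', |\mu' - s_i|)$ lies strictly above the graph at every interior point. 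This yields $|\mu_t - s_i| < t|\mu - s_i| + (1-t)|\mu' - s_i|$ for all $t \in (0,1)$. All remaining summands obey the corresponding weak inequality by convexity, and the affine term gives equality, so adding everything back (with the positive weight $\tfrac{\alpha+\alpha^{-1}}{2}$ on the absolute-value part) preserves the strict inequality and delivers $\gamma(\mu_t) < t\gamma(\mu) + (1-t)\gamma(\mu')$.

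The main point to verify carefully is the strict chord inequality for a single absolute-value term when its kink is strictly interior to $[\mu, \mu']$; everything else is routine. The one subtlety is that the strictness must hold for \emph{every} $t \in (0,1)$ rather than merely for some, which is exactly why the hypothesis $\mu < s_i < \mu'$ is stated with strict inequalities: it places the kink in the open interval, excluding the degenerate endpoint cases $s_i \in \{\mu, \mu'\}$ where the chord would coincide with the graph at an endpoint. Comparing the chord slope against the piecewise slopes $-1$ and $+1$ on the two subintervals confirms the chord strictly dominates throughout $(\mu, \mu')$, completing the argument.
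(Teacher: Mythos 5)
Your proof is correct, and it takes a genuinely different route from the paper's. The paper proves the chord inequality directly: for $\mu < \mu'$ it introduces per-sample signs $\eta_i = \alpha\I[s_i \ge t\mu + (1-t)\mu'] - \alpha^{-1}\I[s_i < t\mu + (1-t)\mu']$, splits the samples into those below $\mu$, those above $\mu'$, and those in $S^*$ (strictly between the two evaluation points), and shows by explicit bookkeeping that $\gamma(t\mu+(1-t)\mu') - t\gamma(\mu) - (1-t)\gamma(\mu')$ equals a sum over $S^*$ of terms $t(s_i-\mu)(\eta_i-\alpha) + (1-t)(s_i-\mu')(\eta_i+\alpha^{-1})$, each nonpositive, and strictly negative when some $s_i$ lies strictly between $\mu$ and $\mu'$. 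You instead rewrite $\gamma$ once and for all as $\tfrac{\alpha+\alpha^{-1}}{2}\sum_i|\mu-s_i|$ plus an affine function (your identity for $g_i$ checks out in all three cases $s_i>\mu$, $s_i<\mu$, $s_i=\mu$, and is insensitive to whether a sample at $\mu$ is assigned to $S_+$ or to neither set), after which convexity is immediate and strictness reduces to the standard fact that the chord over an absolute-value kink strictly dominates the graph at every interior point; your slope-comparison argument for that fact is sound, and it correctly delivers strictness for \emph{every} $t\in(0,1)$, not just some. What your decomposition buys: it is modular, dispenses with the $\eta_i$ bookkeeping, and makes transparent that $\gamma$ is precisely a weighted absolute-deviation (pinball) loss --- which is exactly the representation the paper only introduces later, in the proof of Theorem~\ref{thm:laplace_opt}, to identify $\mu^*$ with the weighted median; with your formulation that step would come for free. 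What the paper's computation buys is self-containedness: the strict term appears explicitly in its expansion, so it never needs the separate (if standard) strict-domination fact for a kinked convex function.
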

\begin{proof}
  Let $t \in [0,1]$ and $t' = 1-t$. Let $\mu, \mu' \in \R, \mu \le \mu'$. Let
  $\eta_i$ be a variable associated with sample $s_i$, such that
  \begin{equation*}
    \eta_i = \alpha\I[s_i \ge t\mu + t'\mu'] - \alpha^{-1}\I[s_i < t\mu +
    t'\mu'],
  \end{equation*}
  where $\I[\cdot]$ is the indicator function. Since $\alpha > 0$, we have that
  $\eta_i - \alpha \le 0$ and $\eta_i + \alpha^{-1} \ge 0$, and $\eta_i - \alpha
  = 0 \Leftrightarrow \eta_i + \alpha^{-1} \ne 0$.

  Let
  $S_- = \{s_i \in S | s_i < \mu\}$,
  $S_+ = \{s_i \in S | s_i \ge \mu\}$,
  $S_-' = \{s_i \in S | s_i < \mu'\}$,
  $S_+' = \{s_i \in S | s_i \ge \mu'\}$, $S^* = S_+ \cap S_-'$. Then
  \begin{align*}
    &\gamma(t\mu+t'\mu')
    \\
    &= \alpha \sum_{s_i \in S'_+} (s_i - t\mu - t'\mu')
    - \alpha^{-1} \sum_{s_i \in S_-} (s_i - t\mu - t'\mu')
    + \sum_{s_i \in S^*} \eta_i (s_i - t\mu - t'\mu')
    \\
    &=
    t \gamma(\mu) + t' \gamma(\mu') +
    \sum_{s_i \in S^*}
    \left(
        \eta_i (s_i - t\mu - t'\mu')
        - t \alpha(s_i - \mu)
        + t' \alpha^{-1} (s_i - \mu')
    \right)
    \\
    &=
    t \gamma(\mu) + t' \gamma(\mu') +
    \sum_{s_i \in S^*}
    \left(
        t\underbrace{(s_i-\mu)}_{\ge 0}\underbrace{(\eta_i-\alpha)}_{\le 0}
        + t'\underbrace{(s_i-\mu')}_{<
        0}\underbrace{(\eta_i+\alpha^{-1})}_{\ge 0}
    \right)
    \\
    &\le t\gamma(\mu) + t'\gamma(\mu'),
  \end{align*}
  which proves that $\gamma(\mu)$ is a convex function.

  Moreover, if there is some $\mu < s_i < \mu'$, then $s_i \in S^*$ and either
  $\eta_i - \alpha < 0$ or $\eta_i + \alpha^{-1} > 0$, so that
  $\gamma(t\mu+t'\mu') < t\gamma(\mu)+t'\gamma(\mu')$ for all $t \in (0,1)$.
\end{proof}
\section{Proof of Theorem~\ref{thm:laplace_opt}}
\begin{proof}
  From Equation~\eqref{eq:laplace_likelihood}, one can see that $\mu$ can be
  optimized independently from the value of $\lambda$. Let $\gamma(\mu)$ be
  defined as in Lemma~\ref{lem:laplace_convex}, such that
  \begin{equation*}
    \ln \mathcal L = C + |S| \ln \lambda - \lambda \gamma(\mu),
  \end{equation*}
  where $C$ is a constant. Therefore, the value $\mu^*$ that minimizes
  $\gamma(\mu)$ is the maximum likelihood estimator. The function $\gamma(\mu)$
  can be rewritten as
  \begin{equation*}
    \gamma(\mu) = \alpha \sum_{s_i \in S_+}|s_i-\mu| + \alpha^{-1} \sum_{s_i \in
    S_-} |s_i-\mu|,
  \end{equation*}
  which is associated with the log-likelihood of the weighted scale-free Laplace
  distribution, whose maximum likelihood estimate $\mu^*$ is given by the
  weighted median~\citep{weightedmedian} with samples in $S_-$ and $S_+$
  weighting $\alpha^{-1}$ and $\alpha$, respectively.

  For $\lambda$, the optimal value is given by:
  \begin{equation*}
    \frac{\partial \ln \mathcal L}{\partial \lambda} = \frac{|S|}{\lambda} -
    \gamma(\mu) = 0,
  \end{equation*}
  which solves for the value provided by the theorem.

  From Lemma~\ref{lem:laplace_convex}, we also know that there is no sample
  between two optima $\mu^*_1$ and $\mu^*_2$, $\mu^*_1<\mu^*_2$, of
  $\gamma(\mu)$, or there would be some $t \in (0,1)$ such that $\gamma(t\mu^*_1
  + (1-t)\mu^*_2) < t \gamma(\mu^*_1) + (1-t)\gamma(\mu^*_2) <
  \max\{\gamma(\mu^*_1), \gamma(\mu^*_2)\}$, which contradicts the optimality of
  $\mu^*_1$ or $\mu^*_2$.
\end{proof}
\section{Proof of Theorem~\ref{thm:laplace_prior}}
\begin{proof}
  Using Equation~\eqref{eq:prior}, we have that the prior can be written as:
  \begin{align*}
    f(p, \lambda; \chi,\nu)
    &=
    C \exp\left(-\lambda \alpha \chi_1 -\lambda \alpha^{-1} \chi_2 + \nu \ln
    \beta\right)
    \\
    &=
    C \exp\left(
      -\lambda \alpha \chi_1 -\lambda \alpha^{-1} \chi_2
      + \nu \left(\ln \lambda + \frac{1}{2} \left(\ln p + \ln(1-p)\right)\right)
    \right)
    \\
    &= C \exp(-\lambda \alpha \chi_1 -\lambda \alpha^{-1} \chi_2) \lambda^\nu
    p^{\nu/2} (1-p)^{\nu/2}
    \\
    &= C \exp(-\lambda \alpha \chi_1 -\lambda \alpha^{-1} \chi_2) \lambda^\nu
    B(p;\nu')
    \\
    &= C_1 \exp(-\lambda \alpha \chi_1 -\lambda \alpha^{-1} \chi_2) (\lambda
    \alpha)^{\nu/2} (\lambda \alpha^{-1})^{\nu/2} B(p;\nu')
    \\
    &= G(\lambda \alpha; \nu', \chi_1)
    G(\lambda \alpha^{-1}; \nu', \chi_2)
    B(p;\nu'),
  \end{align*}
  where $\nu' = 1+\nu/2$.
\end{proof}
\section{Proof of lemma for Theorem~\ref{thm:normal_opt}}
\begin{lemma}
  \label{lem:normal_convex}
  Let $p \in (0,1)$ and $S = \{s_i\}, i \in \{1,2,\ldots,N\}, s_i \in \R$, be
  given. Let the pdf of the asymmetric normal distribution be given by
  Equation~\eqref{eq:asym_normal_pdf}. Then the function
  \begin{equation*}
    \gamma(\mu) = \alpha^{-2} \sum_{s_i \in S_-} {(s_i-\mu)}^2 +
    \alpha^2 \sum_{s_i \in S_+} {(s_i-\mu)}^2,
  \end{equation*}
  where $S_- = \{s_i \in S | s_i < \mu\}$, $S_+ = \{s_i \in S | s_i \ge
  \mu\}$, and $\alpha = \sqrt{\frac{p}{1-p}}$, is strictly convex.
\end{lemma}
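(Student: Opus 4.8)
The plan is to exploit the fact that $\gamma$ separates into one contribution per sample and to show that each such contribution is strictly convex in spite of the coefficient jump at its breakpoint. Concretely, I would write $\gamma(\mu) = \sum_i g_i(\mu)$, where the contribution of sample $s_i$ is
\begin{equation*}
  g_i(\mu) =
  \begin{dcases}
    \alpha^{2} {(s_i-\mu)}^2, & \mu \le s_i \\
    \alpha^{-2} {(s_i-\mu)}^2, & \mu > s_i,
  \end{dcases}
\end{equation*}
since $s_i \in S_+$ exactly when $\mu \le s_i$ and $s_i \in S_-$ exactly when $\mu > s_i$. Because a finite sum of strictly convex functions is strictly convex, it suffices to prove that each $g_i$ is strictly convex.

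The main obstacle is the same one that complicated Lemma~\ref{lem:laplace_convex}: the partition $S_-, S_+$ moves with $\mu$, so $\gamma$ is not a fixed quadratic; as $\mu$ crosses a sample $s_i$, that sample's weight switches from $\alpha^{-2}$ to $\alpha^{2}$. The observation that defuses this is that the switch occurs exactly where the factor ${(s_i-\mu)}^2$ and its derivative both vanish. I would first verify that $g_i$ is $C^1$ at the breakpoint $\mu = s_i$: the two pieces agree in value (both equal $0$) and in first derivative (both one-sided derivatives tend to $0$, since $g_i'(\mu) = -2\alpha^{2}(s_i-\mu)$ on the left and $g_i'(\mu) = -2\alpha^{-2}(s_i-\mu)$ on the right, and $s_i-\mu \to 0$).

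With the $C^1$ gluing in hand, strict convexity follows from showing that $g_i'$ is strictly increasing on all of $\R$. On $\mu < s_i$ the derivative $-2\alpha^{2}(s_i-\mu)$ is negative and strictly increasing toward $0$; on $\mu > s_i$ the derivative $-2\alpha^{-2}(s_i-\mu)$ is positive and strictly increasing away from $0$; and the two values match at $s_i$. Equivalently, $g_i'' = 2\alpha^{2} > 0$ and $g_i'' = 2\alpha^{-2} > 0$ on the two open pieces (using $\alpha > 0$). Hence each $g_i$ is strictly convex, and therefore so is $\gamma$. As an alternative route matching the bookkeeping of Lemma~\ref{lem:laplace_convex}, I could instead verify the convex-combination inequality directly, introducing a per-sample weight $\eta_i \in \{\alpha^{2},\alpha^{-2}\}$ selected by the side of $t\mu+(1-t)\mu'$ on which $s_i$ falls and checking that the cross terms carry the correct sign; I would prefer the gluing argument, since squaring makes both value and slope continuous at the breakpoint and thus avoids any sign-chasing.
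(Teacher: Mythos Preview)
Your argument is correct and rests on the same key observation as the paper's: the breakpoint for each sample occurs precisely where the quadratic term and its first derivative vanish, so the piecewise definition glues to a $C^1$ function with strictly increasing derivative. The paper organizes this slightly differently. Rather than decomposing $\gamma$ into per-sample summands $g_i$ and invoking closure of strict convexity under sums, it treats $\gamma$ as a single function: it checks continuity of $\gamma$ and of $\gamma'$ at an arbitrary breakpoint $s_* = \min S_+$, writes $\gamma'(\mu)$ explicitly, and then bounds the second derivative globally by $\gamma''(\mu) \ge 2|S|\min\{\alpha^2,\alpha^{-2}\} > 0$. Your per-sample decomposition is cleaner, since each $g_i$ has a single breakpoint and the sign analysis is local; the paper's route, in exchange, directly delivers a uniform curvature lower bound (hence strong convexity with an explicit modulus), which could be handy downstream even though the lemma itself only asks for strict convexity.
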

\begin{proof}
  Let $f(x) \colon \R \to \R$ be a function and $f^{(n)}(x)$ its $n$-th
  derivative. If $f(x)$ and $f'(x)$ are continuous and $f''(x) > 0$ for all $x$,
  then $f(x)$ is strictly convex.

  For fixed $S_-$ and $S_+$, $\gamma(\mu)$ is a strictly convex quadratic
  function of $\mu$. If $\gamma(\mu)$ is continuously differentiable and its
  derivative is monotonically increasing for variables $S_-$ and $S_+$, then
  $\gamma(\mu)$ is strictly convex.

  Let $s_* = \min S_+$. The limit $\mu \to s_*$ is given by:
  \begin{align*}
    \lim_{\mu \to s_*^-} \gamma(\mu)
    &= \lim_{\mu \to s_*^-}
    \alpha^{-2} \sum_{s_i \in S_-} {(s_i-\mu)}^2 +
      \alpha^2 \sum_{s_i \in S_+} {(s_i-\mu)}^2
    \\
    &=
    \left(
      \alpha^{-2} \sum_{s_i \in S_- \cup \{s_*\}} {(s_i-s_*)}^2 +
      \alpha^2 \sum_{s_i \in S_+ \backslash \{s_*\}} {(s_i-s_*)}^2
    \right)
    \\
    &= \lim_{\mu \to s_*^+} \gamma(\mu),
  \end{align*}
  which proves that $\gamma(\mu)$ is continuous. Its derivative is given by:
  \begin{equation}
    \label{eq:normal_gamma_derivative}
    \gamma'(\mu) =
    -2 \alpha^2 \sum_{s_i \in S_+} (s_i-\mu)
    -2\alpha^{-2} \sum_{s_i \in S_-} (s_i-\mu),
  \end{equation}
  and we can prove that it is continuous using the same method as before.

  Since $\gamma''(\mu) \ge 2|S| \min\{\alpha^2, \alpha^{-2}\} > 0$, the
  derivative $\gamma'(\mu)$ is monotonically increasing and $\gamma(\mu)$ is
  strictly convex.
\end{proof}
\section{Proof of Theorem~\ref{thm:normal_opt}}
\begin{proof}
  From Equation~\eqref{eq:normal_likelihood}, one can see that $\mu$ can be
  optimized independently of the value of $\lambda$. Let $\gamma(\mu)$ be
  defined as in Lemma~\ref{lem:normal_convex}, such that
  \begin{equation*}
    \ln \mathcal L = C - |S| \ln \sigma - \frac{1}{2\sigma^2} \gamma(\mu),
  \end{equation*}
  where $C$ is a constant. Therefore, the value
  $\mu^*$ that minimizes $\gamma(\mu)$ is the maximum likelihood estimator. And,
  since $\gamma(\mu)$ is strictly convex, this value is unique.

  From the first order optimality condition, we can solve
  Equation~\eqref{eq:normal_gamma_derivative} to find the optimal $\mu^*$ stated
  in the theorem. For $\sigma$, the optimal value is given by:
  \begin{equation*}
    \frac{\partial \ln \mathcal L}{\partial \sigma} = -\frac{|S|}{\sigma} +
    \frac{1}{\sigma^3} \gamma(\mu) = 0,
  \end{equation*}
  which solves for the value provided by the theorem.
\end{proof}
\section{Proof of Theorem~\ref{thm:normal_prior}}
\begin{proof}
  Using Equation~\eqref{eq:prior}, we have that the prior can be written as:
  \begin{align*}
    f(p, \sigma; \chi,\nu)
    &=
    C \exp\left(-\frac{\chi_1}{2 \sigma^2 \alpha^{-2}}
      -\frac{\chi_2}{2 \sigma^2 \alpha^2} + \nu \ln \beta\right)
    \\
    &=
    C \exp\left(
        -\frac{\chi_1}{2 \sigma^2 \alpha^{-2}}
        -\frac{\chi_2}{2 \sigma^2 \alpha^2}
        + \nu \left(\ln 2 -\ln \sigma + \frac{1}{2} \left(\ln p +
        \ln(1-p)\right)\right)
    \right)
    \\
    &=
    C_1 \exp\left(
      -\frac{\chi_1}{2 \sigma^2 \alpha^{-2}}
      -\frac{\chi_2}{2 \sigma^2 \alpha^2}
    \right)
    \sigma^{-\nu}
    p^{\nu/2}(1-p)^{\nu/2}
    \\
    &=
    C_1 \exp\left(
      -\frac{\chi_1}{2 \sigma^2 \alpha^{-2}}
      -\frac{\chi_2}{2 \sigma^2 \alpha^2}
    \right)
    \sigma^{-\nu}
    B(p,\nu_1)
    \\
    &=
    C_1 \exp\left(
      -\frac{\chi_1}{2 \sigma^2 \alpha^{-2}}
      -\frac{\chi_2}{2 \sigma^2 \alpha^2}
    \right)
    (\sigma \alpha)^{-\nu/2}
    (\sigma \alpha^{-1})^{-\nu/2}
    B(p;\nu_1)
    \\
    &=
    Ig(\sigma^2 \alpha^2; \nu_2, \chi_2')
    Ig(\sigma^2 \alpha^{-2}; \nu_2, \chi_1')
    B(p;\nu_1)
  \end{align*}
  where $\nu_1 = 1+\nu/2$, $\nu_2 = \nu/4-1$ and $\chi_i' = \chi_i/2$.
\end{proof}

\bibliography{paper}

\end{document}